\def\eqref#1{equation~\ref{#1}}
\def\1{\bm{1}}
\DeclareMathAlphabet{\mathsfit}{\encodingdefault}{\sfdefault}{m}{sl}
\SetMathAlphabet{\mathsfit}{bold}{\encodingdefault}{\sfdefault}{bx}{n}
\def\gA{{\mathcal{A}}}
\def\gC{{\mathcal{C}}}
\def\gD{{\mathcal{D}}}
\def\gF{{\mathcal{F}}}
\def\gG{{\mathcal{G}}}
\def\gN{{\mathcal{N}}}
\def\gO{{\mathcal{O}}}
\def\gS{{\mathcal{S}}}
\def\gX{{\mathcal{X}}}
\newcommand{\E}{\mathbb{E}}
\newcommand{\R}{\mathbb{R}}
\DeclareMathOperator*{\argmax}{arg\,max}
\newcommand{\set}[1]{\{#1\}}
\newcommand{\brackets}[1]{\left[ #1 \right]}
\newcommand{\parens}[1]{\left( #1 \right)}
\newcommand{\abs}[1]{\left| #1 \right|}
\newcommand{\norm}[1]{\left\| #1 \right\|}
\newcommand{\infnorm}[1]{\norm{#1}_\infty}
\DeclareMathOperator{\indicator}{\mathbf{I}}
\newtheorem{theorem}{Theorem}
\newtheorem{lemma}[theorem]{Lemma}
\newenvironment{proof}{\paragraph{Proof:}}{\hfill$\square$}
\newcommand{\DP}{\text{DP}}
\newcommand{\pib}{\pi_\text{b}}
\newcommand{\pidp}{\pi_\text{DP}}
\newcommand{\ndp}{\tilde{n}}
\newcommand{\Mdp}{\tilde{M}}
\newcommand{\rdp}{\tilde{r}}
\newcommand{\Rdp}{\tilde{R}}
\newcommand{\gAdp}{\gA^{\DP}}
\newcommand{\gSdp}{\gS^{\DP}}
\newcommand{\Pdp}{\tilde{P}}
\newcommand{\gammadp}{\tilde{\gamma}}
\newcommand{\Vmax}{V_{\max}}
\newcommand{\Nmin}{N_{\wedge}}
\begin{document}

%

%

\twocolumn[

\aistatstitle{Decision-Point Guided Safe Policy Improvement}

\aistatsauthor{ Abhishek Sharma \And Leo Benac \And  Sonali Parbhoo \And Finale Doshi-Velez }

\aistatsaddress{ Harvard University\\Cambridge, USA \And Harvard University\\Cambridge, USA \And Imperial College\\London, UK \And Harvard University\\Cambridge, USA } ]

\begin{abstract}
  Within batch reinforcement learning, safe policy improvement seeks to ensure that the learnt policy performs at least as well as the behavior policy that generated the dataset.  The core challenge is seeking improvements while balancing risk when many state-action pairs may be infrequently visited.  In this work, we introduce Decision Points RL (DPRL), an algorithm that restricts the set of state-action pairs (or regions for continuous states) considered for improvement. DPRL ensures high-confidence improvement in densely visited states (i.e. decision points) while still 
  utilizing data from sparsely visited states.  By appropriately limiting where and how we may deviate from the behavior policy, we achieve tighter bounds than prior work; specifically, our data-dependent bounds do not scale with the size of the state and action spaces.  In addition to the analysis, we demonstrate that DPRL is both safe and performant on synthetic and real datasets.
\end{abstract}

\section{Introduction}
\label{sec:introduction}
Batch Reinforcement Learning (Batch RL) \citep{lange2012batch} involves developing an effective policy using a limited number of trajectories generated by another \emph{behavior policy}.  Used in settings ranging from education, robotics and medicine \citep{fuBatchReinforcementLearning2020}, Batch RL is valuable when interaction with the environment may be risky \citep{garcia2015comprehensive} or expensive \citep{kalashnikov2018scalable}. 
However, the benefits of batch RL rely on a sufficiently exploratory behavior policy \citep{sutton2018reinforcement,kumarWhenShouldWe2022}.
In scenarios with limited exploration and experts making systematic errors, learning an optimal policy may not be feasible without risking the adoption of an unsafe policy that performs poorly compared to the behavior. Additionally, since real-world Batch RL deployments are often incremental \citep{fuBatchReinforcementLearning2020}, it is sufficient—and perhaps preferable—to implement changes relative to the existing behavior policy. Our focus is on providing safe, high-confidence modifications in settings with limited exploration in the data.

The main challenge for safe policy improvement (SPI) is appropriately restricting the learned policy to be close to the behavior data, while still identifying the points of improvement over the behavior. Density-based safe RL, such as conservative Q-learning (CQL) \citep{kumarStabilizingOffPolicyQLearning2019a} and behavior cloning (BC), constrain the learned policy to be close to behavior policy. However, this approach is too conservative if the behavior policy is both stochastic and suboptimal as it will not choose the better of the explored actions. Pessimism-based planning \citep{liuProvablyGoodBatch2020c,yuMOPOModelbasedOffline2020,kidambiMOReLModelBasedOffline2021} incorporates pessimism in the value estimates of state-action pairs (proportional to the uncertainty in the estimates), but can be overly conservative because it will penalize actions that lead to states that are not frequently visited (even if the action itself is observed often enough). 

To address the limitations of pessimistic planning and density-based constraints, support-constrained policies \cite{wu2022supported} restrict the learned policy to the support of the behavior policy. However, these policies can become unsafe with even a few noisy actions or rewards, which are common in practice. Among support-based methods, count-based techniques like Safe Policy Improvement with Baseline Bootstrapping (SPIBB) \cite{larocheSafePolicyImprovement2019} impose a count-based constraint on $(s,a)$ pairs to ensure policy improvements occur only over sufficiently observed state-action pairs. However, SPIBB requires access to the behavior policy function, which is often impractical in real-world applications. We demonstrate that its performance significantly deteriorates when the behavior policy must be estimated, and its improvement guarantees are not tight in practice.

Our key insight is that we need to identify only those behavior changes that yield the most improvement. We achieve this by constraining changes to $(s,a)$ pairs observed frequently in the dataset, specifically those with a count $\geq \Nmin$ (termed `Decision Points'). For states where we lack high-confidence improvements, we defer to the current behavior policy. This approach enhances performance over SPIBB in two ways: it allows for better returns without needing access to the true behavior policy and enables an explicit `DEFER' flag when we are unsure about achieving safe improvements. The hyperparameter $\Nmin$ allows us to balance confidence in performance improvement, safety, and the number of changes. Additionally, this method results in a learned policy with a few high-improvement changes that can be easily reviewed and implemented in practice (e.g., by a clinician).

Our work makes the following contributions: (i) We introduce Decision Points RL (DPRL), a safe batch RL algorithm that restricts improvements to specific state-action pairs (or regions in continuous states), deviating from the behavior policy only when confident and deferring otherwise. (ii) Unlike previous methods, DPRL operates without needing knowledge of the behavior policy to ensure safety. (iii) DPRL offers significantly tighter theoretical guarantees than existing algorithms for both discrete and continuous state problems, with bounds that depend on the safety threshold parameter $\Nmin$ rather than the state-action space size. (iv) Empirically, we demonstrate that DPRL better balances safety and improvement compared to alternatives in both synthetic and real-world medical datasets while making minimal changes.  


\section{Related Work}
\label{sec:related_work}
\textbf{Policy regularization in Offline RL.}
Many RL applications require agents to learn from a fixed batch of pre-collected data, limiting further data collection. Various methods constrain policies to this data. Density-constraining methods, such as \citep{fujimotoOffPolicyDeepReinforcement2019a, kumarStabilizingOffPolicyQLearning2019a, kumarConservativeQLearningOffline2020, yuCOMBOConservativeOffline2022, thomasHighConfidencePolicy2015a}, keep the action space close to the behavior policy. For instance, \citep{kumarStabilizingOffPolicyQLearning2019a} constrains action selection based on bootstrapping errors for actions outside the training data, while \citep{kumarConservativeQLearningOffline2020} introduces a conservative Q-learning penalty (CQL) to address distribution shifts between the dataset and the learned policy. 
However, we demonstrate that density regularization techniques can be suboptimal when the behavior policy is stochastic and suboptimal in certain states. In contrast, our approach does not impose restrictions on the policy distribution, only limiting support to frequently observed actions. Support-constraining methods, such as \citep{singhOfflineRLRealistic2022a}, restrict actions to those within the support of the behavior data but lack guarantees. Unlike our method, policy regularization techniques in offline RL do not prioritize safe policy improvement.

\textbf{Safe Policy Improvement.} 
Several studies have addressed safe policy improvement in batch RL settings (e.g., \citep{thomasHighConfidenceOffPolicyEvaluation2015, ghavamzadehSafePolicyImprovement2016, larocheSafePolicyImprovement2019}). \citep{ghavamzadehSafePolicyImprovement2016} utilizes pessimism to regularize infrequently occurring state-action pairs. Meanwhile, \citep{larocheSafePolicyImprovement2019,nadjahiSafePolicyImprovement2019} propose an algorithm that bootstraps a trained policy with a baseline when uncertainty is high, offering SPI guarantees only when insufficiently observed pairs adhere to the behavior policy. However, these results only hold if we have access to the behavior policy a priori. Other works, such as \cite{schollSafePolicyImprovement2022} and \cite{wienhoftMoreLessSafe2023}, provide additional guarantees for bootstrapping methods. Some researchers have used pessimism to regularize the reward or action-value function for rarely observed pairs. \citep{liuProvablyGoodBatch2020c} presents guarantees for batch RL via pessimistic formulations of policy and Q-iteration algorithms, while \citep{kidambiMOReLModelBasedOffline2021} and \citep{yuMOPOModelbasedOffline2020} focus on learning pessimistic MDPs for near-optimal policies. \citep{kimModelbasedOfflineReinforcement2023b} constructs a penalized reward function based on state-action counts but fails to exclude rarely observed pairs. In contrast, our approach does not regularize the value function or reward model but directly constrains the policy, avoiding excessive conservativeness. Notably, we do not require explicit access to the behavior policy and provide tighter improvement guarantees.

\textbf{Non-parametric RL and Trajectory Stitching.} 
Instead of directly constraining actions and states for safety and conservatism, there is extensive research on improving models through nonparametric methods or heuristics in areas with uneven data coverage. Among these, \cite{shresthaDeepAveragersOfflineReinforcement2020} use k-nearest neighbors to estimate reward and transition models, which works well when neighbors are nearby but fails if neighbors are distant, while others such as \cite{gottesmanCombiningParametricNonparametric2019} combine parametric and nonparametric methods for better policy evaluation. Unlike these, our work makes no assumptions of the form of value function and offers a general analysis for safe policy improvement. Other methods, like those by \cite{charBATSBestAction2022,hepburnModelbasedTrajectoryStitching2022}, use distance-based metrics for value function approximation but do not focus on safety policy improvement and assume accurate model estimation, which is challenging with uneven data coverage. \citep{zhangInterpretableRLFramework2022} introduce decision regions for safe policy learning using a heuristic approach without theoretical guarantees, unlike our method. Notably, none of these heuristic or nonparametric approaches offer any theoretical guarantees as we do here.
More closely related, \citep{ledererUniformErrorBounds2019} use Gaussian Processes (GPs) to estimate transition models. Though their approach does provide theoretical bounds, our guarantees are tighter; we also focus explicitly on the task of safe policy improvement.
 
\textbf{Learning to Defer to Human Expertise.}
Policy regularization, SPI, and trajectory stitching methods do not effectively allow for selective use of the behavior policy during decision-making or deferring to it when the learned policy may not be significantly better. Some studies focus on learning to defer to human expertise in both static and sequential contexts. For instance, \citep{madras2018predict} and \citep{mozannar2020consistent} investigate this in static classification tasks, while \citep{liKnowsWhatIt2011} explore it in online settings that require a polynomial number of deferrals. In offline settings, however, more frequent deferrals may be more reasonable, as proposed here.
\citep{straitouriReinforcementLearningAlgorithmic2021, joshiLearningtodeferSequentialMedical2022} also study deferring to the behavior policy when uncertain but lack theoretical guarantees.

\section{Background}
\label{sec:background}

An MDP is a tuple $(\mathcal{S}, \mathcal{A}, P, R, \gamma)$ of discount $\gamma \in [0,1)$, states $s \in \mathcal{S}$, actions $a \in \mathcal{A}$, transition probabilities $P(s'|s,a)$, and rewards $R: \mathcal{S} \times \mathcal{A} \to \mathbb{R}$.  In this work, we assume that $R$ is bounded in $[0, R_{\max}]$, $\gA$ is discrete, the starting state is fixed, and that $R$ and $P$ are unknown. 

The behavior policy $\pi_b(a|s)$ is the policy that generated the observed trajectories, and $\pi(a|s)$ is the policy we are trying to learn. Given a policy $\pi$, we call $\eta^\pi_h(s) = \Pr[S_h = s | \pi]$ the marginal distribution of $S_h$ under $\pi$. Then,
$\eta^\pi(s,a) = \eta^\pi_h(s) \pi(a|s) = (1-\gamma) \sum_{h=0}^\infty \gamma^h \eta^\pi_h(s,a)$ is called the marginal distribution of $(s,a)$.
We are given a dataset $\gD = \set{{S^n_0, A^n_0, R^n_0, \cdots, S^n_{T_n}, A^n_{T_n}, R^n_t}}_{n=1}^N$ consisting of $N$ trajectories, with actions taken by $\pib$. State-action pairs in $\gD$ can also be assumed to drawn \emph{i.i.d.} from a \emph{behavior distribution} $\mu(s,a) = \eta^\pi(s,a)$. We overload the notation and also denote the marginal distribution over states by $\mu(s) = \sum_{a \in \gS)} \mu(s,a)$.

The \emph{value} $V_{\pi}(s)$ of policy $\pi$ at state $s$ is the expected discounted sum of rewards starting from $s$ following policy $\pi$: $V_\pi(s) = \mathbb{E}_{\pi}[\sum_{t=1}^{T} \gamma^{t-1} R_{t}|s_1 = s]$.
The action-value function (or Q-function) $Q_{\pi}(s,a) $ is the value of performing action $a$ in state $s$ and then performing policy $\pi$ after: $Q_\pi(s,a) = \mathbb{E}_\pi[\sum_{t=1}^{T} \gamma^{t-1} R_{t}|s_1 = s, a_1 = a]$. Let $Q_{\max}$ or $V_{\max}$ be upper bounds on $Q(s,a)$ and $V(s)$. We denote $\rho(\pi)$ as the value of the start state.

\section{Challenges with Prior Algorithms}
\label{sec:mdp_example}

\begin{figure*}[ht]
    \centering
    \begin{minipage}[b]{0.45\linewidth}
        \centering
        \resizebox{.9\textwidth}{!}{\begin{tikzpicture}[
      node distance=0.7cm and 0.4cm, 
      state/.style={draw, circle, minimum size=8mm, inner sep=0pt}, 
      action/.style={fill, circle, minimum size=2pt, inner sep=2pt}, 
      square/.style={draw, rectangle, minimum size=5mm, inner sep=0pt, fill=gray!50}, 
      dots/.style={fill, circle, minimum size=1pt, inner sep=1pt}, 
      every edge/.style={draw, -{Stealth[round]}, thick}, 
      >=Stealth
    ]

    \node[state] (s0) {$s_0$}; 

    \node[action] (a0) [above=1cm of s0, xshift=1.7cm, yshift=.4cm] {}; 
    \node[action] (a1) [left=.2cm of s0] {}; 
    \node[action] (a2) [below=1cm of s0,  xshift=1.7cm, yshift=-.2cm] {}; 

    \draw[] (s0) -- (a0) node[midway, above left] {$\pi_b(a_0)=\epsilon$}; 
    \draw[] (s0) -- (a1) node[midway, above left] {$a_1$}; 
    \draw[] (s0) -- (a2) node[midway, above left] {$\pi_b(a_2)=\epsilon$}; 

    \node[square] (z) [right=7cm of a1] {}; 
    \node[square] (z1) [left=4.5cm of a1] {};

    \node[state] (s1D) [right=of a0, yshift=0.8cm] {$s_{D1}$}; 
    \draw[->] (a0) -- (s1D); 

    \node[state] (s2D) [right=of s1D] {$s_{21}$}; 
    \draw[->] (s1D) -- (s2D); 

    \node[state, fill=green!30] (sD1) [right=of s2D] {$s_{D1}$}; 
    \draw[->] (s2D) -- (sD1); 

    \draw[->] (sD1) -- (z); 

    \node[state] (s1K) [right=of a0, yshift=-1cm] {$s_{1K}$}; 
    \draw[->] (a0) -- (s1K); 

    \node[state] (s2K) [right=of s1K] {$s_{2K}$}; 
    \draw[->] (s1K) -- (s2K); 

    \node[state, fill=green!30] (sDK) [right=of s2K] {$s_{DK}$}; 
    \draw[->] (s2K) -- (sDK); 

    \draw[->] (sDK) -- (z); 
    \node[above] at (sD1.north) {$r \sim \textbf{Unif}[.65, .75]$};

    \node[dots] at ($(s1D)!0.4!(s1K)$) {}; 
    \node[dots] at ($(s1D)!0.5!(s1K)$) {}; 
    \node[dots] at ($(s1D)!0.6!(s1K)$) {}; 

    \node[dots] at ($(s2D)!0.4!(s2K)$) {}; 
    \node[dots] at ($(s2D)!0.5!(s2K)$) {}; 
    \node[dots] at ($(s2D)!0.6!(s2K)$) {}; 

    \node[dots] at ($(sD1)!0.4!(sDK)$) {}; 
    \node[dots] at ($(sD1)!0.5!(sDK)$) {}; 
    \node[dots] at ($(sD1)!0.6!(sDK)$) {}; 

    \node[state] (f1) [left=of a1] {$f_1$}; 
    \draw[->] (a1) -- (f1); 

    \node[state] (f2) [left=of f1] {$f_2$}; 
    \draw[->] (f1) -- (f2); 

    \node[state] (fD) [left=of f2] {$f_{D}$}; 
    \draw[->] (f2) -- (fD); 

    \draw[->] (fD) -- (z1); 
    \node[below] at (fD.south) {$r=0.55$};

    \node[state] (t1D) [right=of a2, yshift=1.2cm] 
    {$t_{11}$}; 

    \draw[->] (a2) -- (t1D); 

    \node[state] (t2D) [right=of t1D] {$t_{21}$}; 
    \draw[->] (t1D) -- (t2D); 

    \node[state, fill=red!30] (tD1) [right=of t2D] {$t_{D1}$}; 

    \draw[->] (t2D) -- (tD1); 

    \draw[->] (tD1) -- (z); 

    \node[state] (t1K) [right=of a2, yshift=-0.7cm] {$t_{1K}$}; 
    \draw[->] (a2) -- (t1K); 

    \node[state] (t2K) [right=of t1K] {$t_{2K}$}; 
    \draw[->] (t1K) -- (t2K); 

    \node[state, fill=red!30] (tDK) [right=of t2K] {$t_{DK}$}; 
    \draw[->] (t2K) -- (tDK); 
    \node[below] at (tDK.south) {$r \sim \textbf{Unif}[0, 1]$};
    \draw[->] (tDK) -- (z); 

    \node[dots] at ($(t1D)!0.4!(t1K)$) {}; 
    \node[dots] at ($(t1D)!0.5!(t1K)$) {}; 
    \node[dots] at ($(t1D)!0.6!(t1K)$) {}; 

    \node[dots] at ($(t2D)!0.4!(t2K)$) {}; 
    \node[dots] at ($(t2D)!0.5!(t2K)$) {}; 
    \node[dots] at ($(t2D)!0.6!(t2K)$) {}; 

    \node[dots] at ($(tD1)!0.4!(tDK)$) {}; 
    \node[dots] at ($(tD1)!0.5!(tDK)$) {}; 
    \node[dots] at ($(tD1)!0.6!(tDK)$) {}; 

    \end{tikzpicture}
    \end{minipage}
    \hspace{0.05\linewidth} 
    \begin{minipage}[b]{0.45\linewidth}
        \centering
        \resizebox{0.65\textwidth}{!}{\begin{tikzpicture}[
      node distance=0.7cm and 0.4cm,
      state/.style={draw, circle, minimum size=8mm, inner sep=0pt}, 
      action/.style={fill, circle, minimum size=2pt, inner sep=2pt}, 
      square/.style={draw, rectangle, minimum size=5mm, inner sep=0pt, fill=gray!50}, 
      dots/.style={fill, circle, minimum size=1pt, inner sep=1pt}, 
      every edge/.style={draw, -{Stealth[round]}, thick},
    ]

    \node[state] (s0) at (0, 0) {$s_0$};

    \node[state, fill=green!30] (b1) [right=1cm of s0, yshift=1cm] {$b_1$};
    \node[state] (b2) [below=of b1] {$b_2$};

    \node[above] at (b1.north) {$r \sim \textbf{Unif}[0.5, 0.9]$};
    \node[below] at (b2.south) {$r = 0.55$};

    \node[state, fill=red!30] (c1) [left=1cm of s0, yshift=1cm] {$c_1$};
    \node[state, fill=red!30] (cK) [below=1cm of c1] {$c_K$};

    \node[above] at (c1.north) {$r \sim \textbf{Unif}[0, 1]$};
    \node[below] at (cK.south) {$r \sim \textbf{Unif}[0, 1]$};

    \node[square] (sq_b) at ($(b1)!0.5!(b2)$) [right=0.7cm] {}; 
    \node[square] (sq_c) at ($(c1)!0.5!(cK)$) [left=0.7cm] {};  

    \draw[->] (s0) -- (b1) node[midway, above] {$\epsilon$};
    \draw[->] (s0) -- (b2) node[midway, above] {$1-2\epsilon$};

    \draw[->] (s0) -- (c1) node[midway, above] {$\epsilon/K$};
    \draw[->] (s0) -- (cK) node[midway, above] {$\epsilon/K$};

    \draw[->] (b1) -- (sq_b);
    \draw[->] (b2) -- (sq_b);

    \draw[->] (c1) -- (sq_c);
    \draw[->] (cK) -- (sq_c);

    \node[dots] at ($(c1)!0.35!(cK)$) {}; 
    \node[dots] at ($(c1)!0.50!(cK)$) {}; 
    \node[dots] at ($(c1)!0.65!(cK)$) {}; 

\end{tikzpicture}
    \end{minipage}
    \vspace{-0.02\linewidth} 

    \includegraphics[width=.24\textwidth]{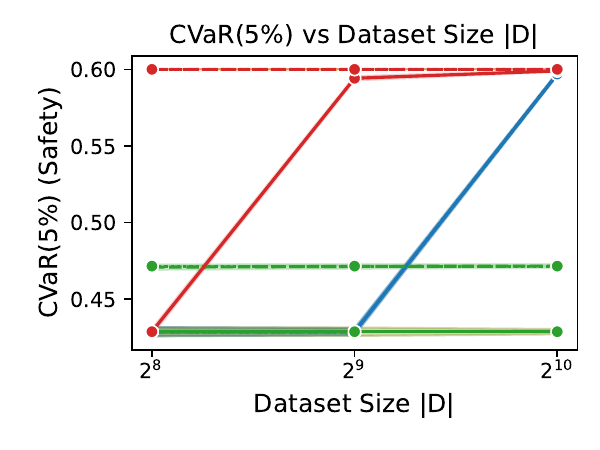}
    \includegraphics[width=.24\textwidth]{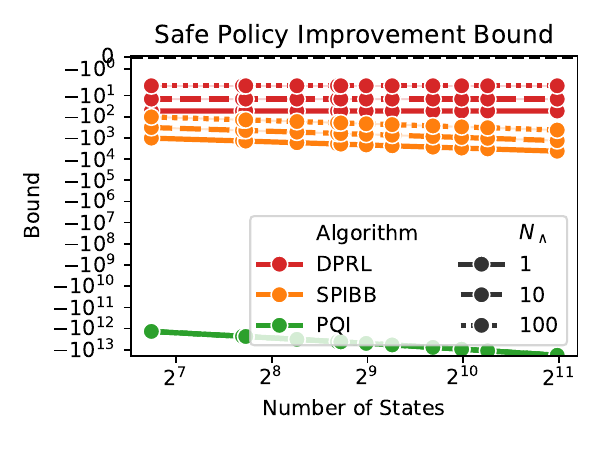}
    \includegraphics[width=.24\textwidth]{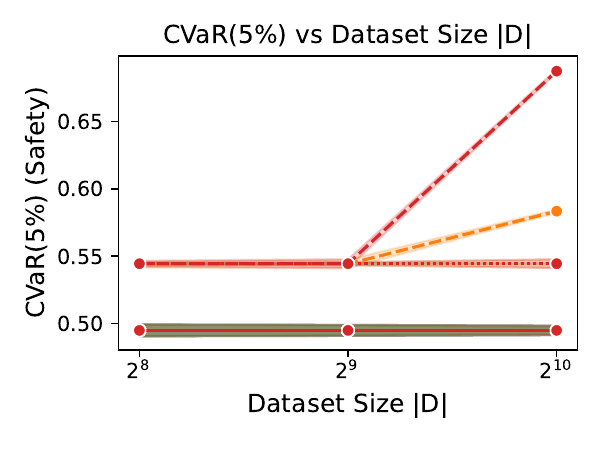}
    \resizebox{.21\textwidth}{!}{\includegraphics[width=.24\textwidth]{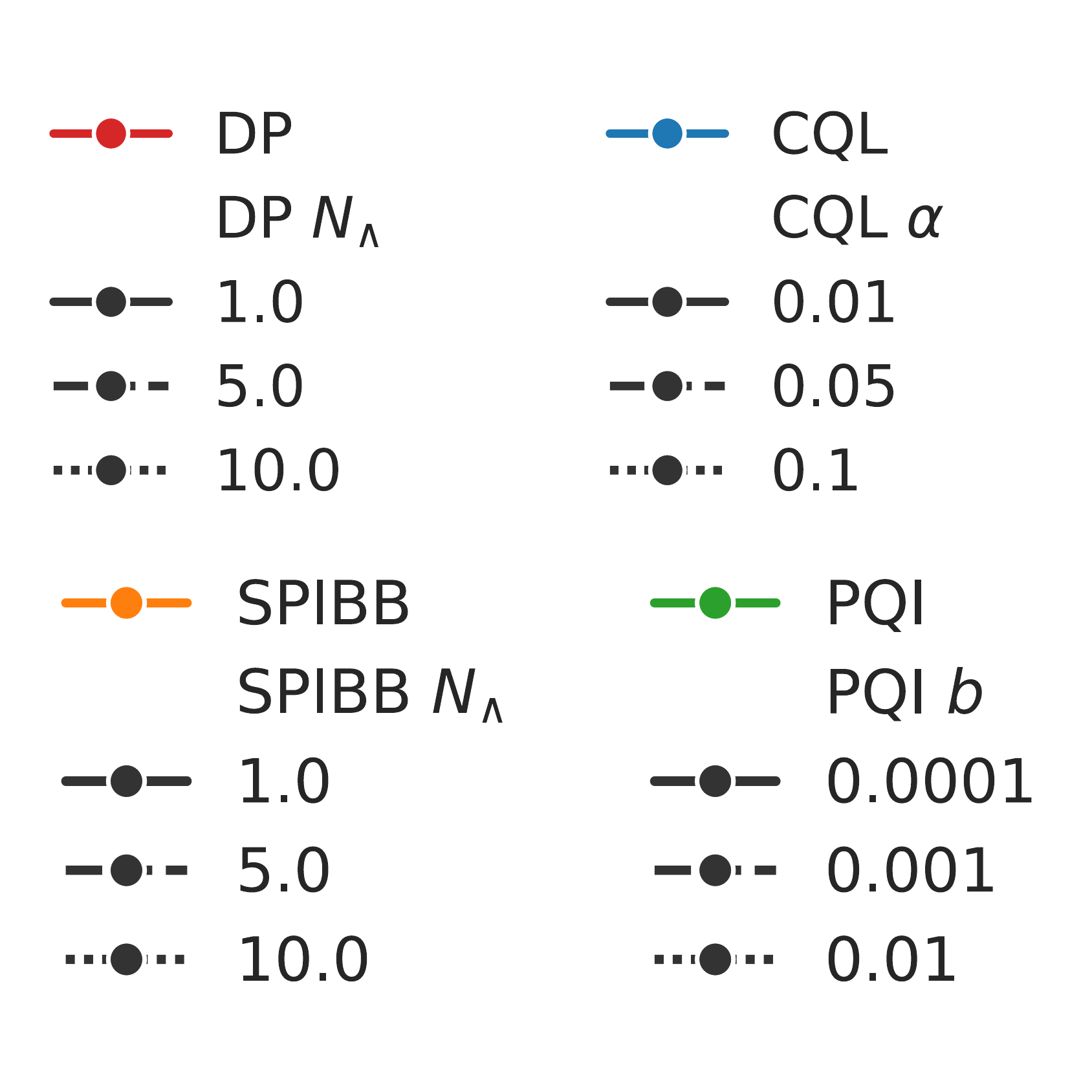}}
    \caption{Top: Challenging MDPs prior approaches. In the first MDP, the behavior goes left with high probability. In the second MDP, the behavior goes to state $b_2$ with high probabilit. Green states are part of optimal trajectories, and red states are part of risky trajectories. Bottom L to R: PQI does poorly on the first MDP, DP has the tightest safety bounds, and CQL does poorly on the second MDP.}
    \label{fig:mdp_example}
\end{figure*}



\textbf{Issues with Standard Assumptions.}
Many existing algorithms make assumptions about the data distribution. 
One common assumption is concentrability, i.e., $\infnorm{\nu(s,a)/\mu(s,a)} \leq C$, where $\nu(s,a)$ is any distribution reachable for some non-stationary policy. The improvement guarantee scales with $C$. However, $C$ can be arbitrarily large for real-world dataset, and algorithms relying on this assumption often diverge in practice \citep{liuProvablyGoodBatch2020c}.
To remedy this, \citep{liuProvablyGoodBatch2020c} assumed bounded density instead, i.e., $\eta^{\pi}_h(s,a) \leq U$ for any non-stationary policy $\pi$. The improvement guarantee scales with $U/b$, where $\indicator[{b \leq \hat{\mu}(s,a)}]$ is the filter chosen to exclude low-density $(s,a)$ pairs in the empirical data density $\hat{\mu}$. However, the improvement guarantee is arbitrarily loose when selecting $(s,a)$ pairs with sufficient observations but low value under the behavior policy.

\textbf{Illustrative Example.} We now show how existing SPI baselines (PQI, SPIBB, CQL) fail to choose optimal actions while avoiding risky ones. To demonstrate how a pessimism-based approach fails, consider the first MDP in Figure~\ref{fig:mdp_example}. There are three actions at the starting state $s_0$: an optimal action, $a_0$, leading to a ‘forest’ of sparsely-visited states with good outcomes; a risky action, $a_2$, leading to a ‘forest’ of states with bad outcomes; and a suboptimal, risk-free action, $a_1$, that leads to a middle road and is chosen by the behavior most of the time. A pessimism-based approach like PQI---which penalizes value estimates based on uncertainty---fails to learn that, even though the (\(s,a\)) pairs in the optimal forest are sparse in the data, we have enough observations to conclude that the good forest is the better choice. It should be noted that reducing the threshold for (\(s,a\)) density may result in taking the risky action.

For the second scenario, suppose there are 10 actions, 8 of which lead to extremely low values (and hence are risky), and it is difficult to determine that they are risky. One action, $a_1$, is optimal, while another, $a_2$, is frequently chosen by the behavior but yields suboptimal rewards. A density regularization method like CQL chooses the suboptimal action if it increases its regularization penalty (i.e. favoring the behavior policy) and chooses the risky actions if it reduces the penalty.

These two MDP scenarios illustrate how current SPI methods struggle with different types of action-risk dynamics. This intuition is supported empirically. In Figure~\ref{fig:mdp_example} (left plot), we see that PQI fails to attain a high 5\% conditional value at risk (CVaR 5\%) (a metric used to measure safe improvement) even for relatively large dataset sizes in the ``forest" MDP. Similarly, in the second MDP (right plot), CQL and SPIBB are not able to consistently learn a safe policy. In both cases, we can set an $N_{min}$ parameter in DPRL to ignore actions that cannot be reliably estimated to have a good value.
These examples also illuminate the settings in which DPRL performs particularly well: when there are systematic deviations from the behavior policy and when we are not required to act at all points in the state-action space.

\textbf{Access to behavior.}
While SPIBB appears to be similar to DPRL in its use of a count parameter like $\Nmin$, it requires access to the true behavior policy at training time---like most SPI approaches \citep{larocheSafePolicyImprovement2019,schollSafePolicyImprovement2022,wienhoftMoreLessSafe2023a}. We show in Section~\ref{sec:experiments} that its performance can be significantly worse without access to the behavior policy.
In contrast, we do not require access to the behavior policy during training time. This is an important consideration in real-world systems, where it can be difficult to elicit the behavior policy. For example, when working with doctors, we cannot expect to know the functional form of their behavior.
However, we do require access to the behavior policy for \textit{evaluation}, which can be achieved by either running a silent trial or by using off-policy evaluation (OPE). OPE in a learning-to-assist framework is a promising area of research, and beyond the scope of this work.

\section{Method}
\label{sec:method}
We now describe our decision-point RL method, which addresses several of the challenges with existing algorithms.  We provide bounds for this method in Sec.~\ref{sec:analysis}.

\begin{figure*}
    \includegraphics[width=.34\textwidth]{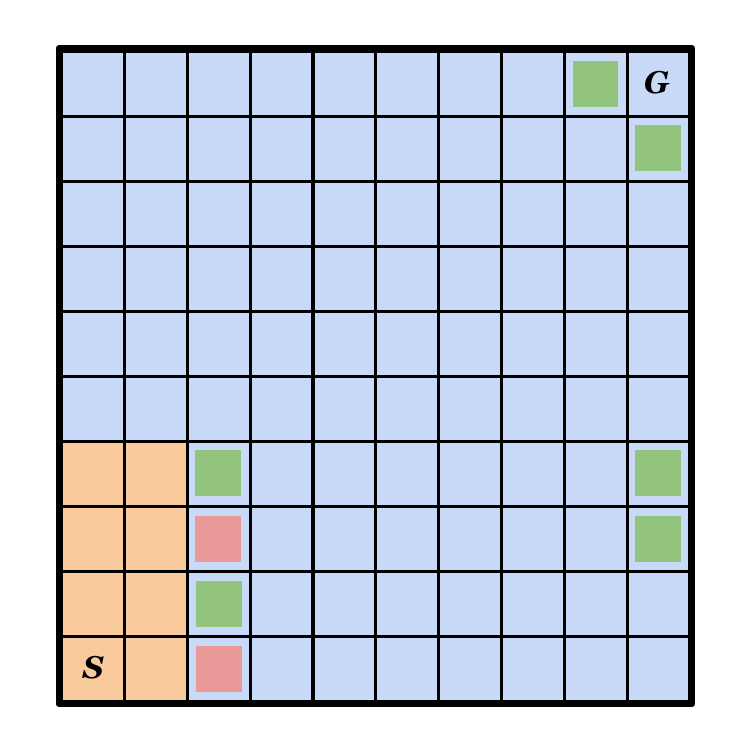}
    \includegraphics[width=.3\textwidth]{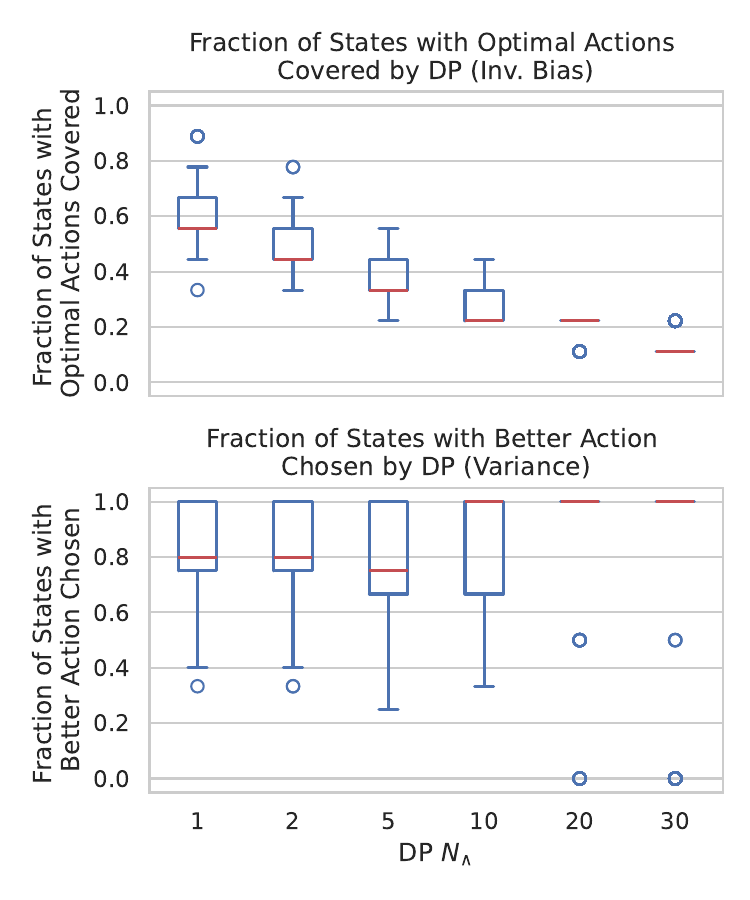}
    \begin{minipage}[b]{.33\textwidth}
        \centering
        \vspace{-10pt}
        \includegraphics[width=.9\textwidth, trim=0 19 0 0, clip]{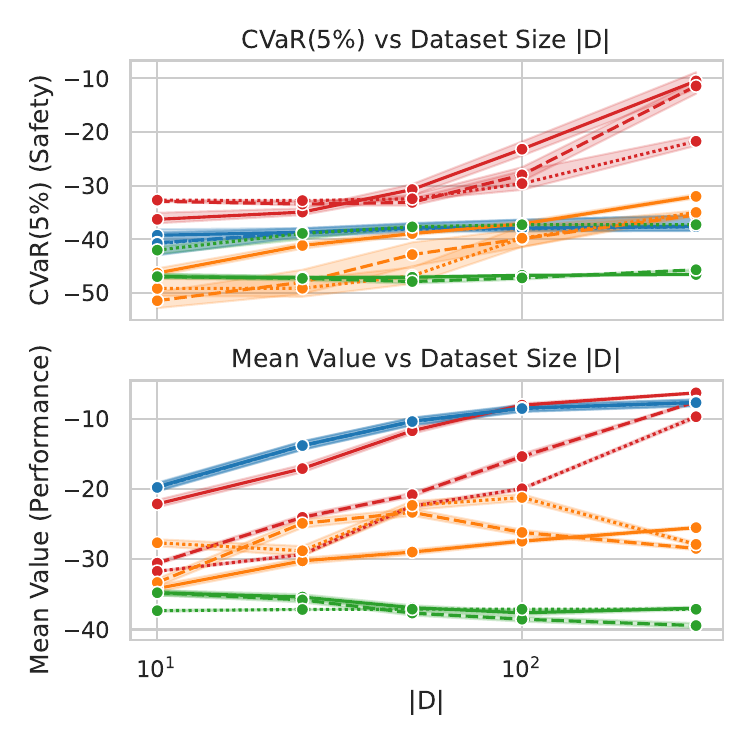}        \includegraphics[width=.7\textwidth]{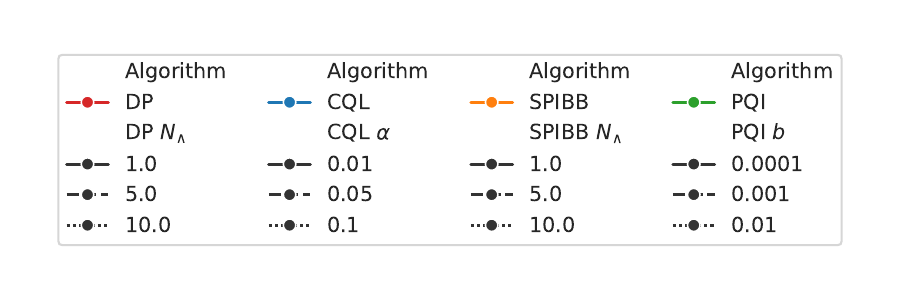}
    \end{minipage}    
    \caption{\textbf{GridWorld}: 
    (left) Illustration of our Gridworld environment,
    (middle) Bias-variance trade-off managed by $\Nmin$,
    (right) Performance of DPRL in terms of CVaR and Mean Value. DPRL provides safe policy improvement (CVaR), while matching baselines on mean value.}
    \label{fig:gw_results}
\end{figure*}

\textbf{Discrete Case.}
Define the following sets constructed from the dataset $\gD$:
\begin{gather}
    \label{eq:advantageous_actions_discrete}
    \gAdp_s = \set{ a \in \gA : n(s, a) \geq \Nmin \text{ and } \hat{Q}^{\pi_b}(s, a) \geq \hat{V}^{\pi_b}(s) }, \nonumber \\\quad
    \gSdp = \set{ s \in \gS : \gAdp_s \neq \emptyset }
\end{gather}
where $\hat{Q}(s, a)$ and $\hat{V}(s)$ are the estimated values under the behavior policy $\pib$.  The set $\gAdp = \set{\gAdp_s}$ represents the set actions in each state that we are confident are advantageous over the behavior policy (in the sense that the number of visits $n(s, a) \geq \Nmin$).  The set $\gSdp$ is the set of states where at least one advantageous action exists. These states are the \emph{decision points} on which we will  recommend changes.  We will defer to the behavior policy in the set $\Phi = \set{s \in \gS : \gAdp_s = \emptyset}$ because cannot be confident that an advantageous action exists in those states.

Once the decision points are determined, we create an ``elevated'' Semi-MDP (SMDP) $\Mdp = (\gSdp, \gA, \Pdp, \Rdp, \gammadp)$, where the state space is restricted to the decision points $\gSdp$, and the transition, reward, discount functions are estimated using the dataset $\gD$.
We use the SMDP framework since the number of time steps to reach the next decision point is not fixed, and can vary with each $(s, a, s')$ triplet.
The policy set $\Pi^{\text{DP}}$ is defined as the set of deterministic policies that select can only select advantageous actions in each state:
\begin{gather}
    \Pi^{\text{DP}} = \set{ \pi : \pi(a|s) = 0 \quad \forall a \notin \gAdp_s }
\end{gather}
For each state $s \in \gSdp$, action $a \in \gAdp_s$, and next state $s' \in \gSdp$, we say that a transition $(s, a, s', k)$ exists in a trajectory if $s'$ is the first decision point in the trajectory starting from $(s, a)$ and taking $k$ steps (only considering first visits), and $\rdp(n, s, a, s', k)$ is the discounted sum of rewards in the trajectory $n$ starting from $(s, a)$ and ending in $s'$ after $k$ steps, and $0$ if no such transition exists.
Then, the counts $\tilde{n}(s, a, s', k)$ are defined as the number of times $(s, a, s', k)$ exists in the dataset, and $\tilde{P}(s' | s, a)$, $\gammadp(s, a, s')$ and $\Rdp(s, a)$ are computed from these counts (see Algorithm~\ref{alg:make_smdp} in the appendix for details):
\begin{gather}
    \tilde{P}(s' | s, a) = \frac{\sum_{k=1}^{T} \ndp(s, a, s', k)}{\sum_{s' \in \gSdp} \sum_{k=1}^{T} \ndp(s, a, s', k)}, \quad \\
    \gammadp(s, a, s') = \frac{\sum_{k=1}^{T} \ndp(s, a, s', k) \gamma^k}{\sum_{k=1}^{T} \ndp(s, a, s', k)}, \notag\\
    \Rdp(s, a) = \frac{\sum_{s' \in \gSdp} \sum_{k=1}^{T} \sum_{n=1}^{N} \rdp(n, s, a, s', k)}{\sum_{s' \in \gSdp} \sum_{k=1}^{T} \ndp(s, a, s', k)}
\end{gather}

We optimize over the policy set $\Pi^{\text{DP}}$ by using policy iteration \citep{bradtkeReinforcementLearningMethods1994,suttonMDPsSemiMDPsLearning} on $\Mdp$. In each iteration $i$, the policy $\pi^{(i)}$ is evaluated to get $V^{(i)}_{\Mdp}$, which is then improved to get $\pi^{(i+1)}$:
{\small
\begin{gather}
\label{eq:policy_improvement}
    \pi^{(i+1)}(s) = \argmax_{a \in \gAdp_s} \Rdp(s, a) + \E_{\Pdp(s' | s, a)} \gammadp(s, a, s') V^{(i)}_{\Mdp}(s')
\end{gather}
}

If the policy iteration converges after $K$ steps to give $\pi^{(K)}$, the final policy $\pidp$ can then be used to either defer to the behavior policy or to make a better decision for a given state $s$:
\begin{gather}
    \pidp(s) = \begin{cases}
        \text{DEFER} & \text{if } s \in \Phi \\
        \pi^{(K)}(s) & \text{otherwise}
    \end{cases}
\end{gather}

\textbf{Continuous Case.}
For continuous state spaces, we describe a variant of the DP algorithm that can be used to provide safe policy improvements for any given state (for pseudocode, see Algorithm~\ref{alg:dp_continuous} in the appendix). The algorithm involves storing the dataset $\gD$.
We define the following distance metric over state-action pairs $(s, a)$ and states $s$:
\begin{gather*}
    d((s, a), (s', a')) = d(s, s') \text{ if } a = a', \text{ and } \infty \text{ otherwise} \\
    d(s, s') = \norm{s - s'}
\end{gather*}
Given a state $s$, define the set of neighbors of $s$ within a ball of radius $r$ as $\gN(s) = \set{ S^n_t : d(s, S^n_t) \leq r \text{ for } n = 1, \ldots, N }$ and the count of neighbors as $n(s) = \abs{\gN(s)}$.
Note that our analysis assumes the first neighboring state is included per trajectory $n$. However, in practice, including all neighbors helps reduce the variance of the estimates.
Further, for each action $a$, $\gN(s, a) = \set{ (S^n_t, A^n_t) : d((s, a), (S^n_t, A^n_t)) \leq r \text{ for } n = 1, \ldots, N }$ and $n(s, a) = \abs{\gN(s, a)}$.
We use a Ball-Tree data structure to efficiently find a neighbor in $\gO(\log K)$ time after $\gO(K \log K)$ preprocessing time, where $K$ is the total number of points to search over.
Analogous to the discrete case, we define the set of advantageous actions in state $s$ as:

{\small
\begin{gather}
    \label{eq:advantageous_actions_continuous}
    \gAdp_s = \set{ a \in \gA : n(s, a) \geq \Nmin \text{ and } \hat{Q}^{\pib}(s, a) \geq \hat{V}^{\pib}(s) }
\end{gather}}

where $\hat{Q}^{\pib}(s, a)$ and $\hat{V}^{\pib}(s)$ are estimated by averaging the returns of the neighbors in $\gN(s, a)$ and $\gN(s)$, respectively.
We defer to the behavior if no advantageous actions exist in the state, and return the action with the highest estimated $Q$ value otherwise:
{\small
\begin{gather}
    \pidp(s) = \begin{cases}
        \text{DEFER} & \text{if } \gAdp_s = \emptyset \\
        \argmax_{a \in \gAdp_s} \hat{Q}^{\pib}(s, a) & \text{otherwise}
    \end{cases}
\end{gather}}

The policy is implicitly defined using the dataset $\gD$, and we compute it for any state.
The above procedure uses non-parametric estimation of $Q^{\pib}$- and $V^{\pib}$-values using the dataset $\gD$.
The $r$ hyperparameter controls the trade-off between bias and variance in the estimates, as well as the sparsity of improvements.
A smaller $r$ will achieve lower bias and fewer possible decision points, while a larger $r$ will result in a higher bias of estimates and more decision points.
Note that the variance of the estimates is still controlled because we average over at least $\Nmin$ neighbors in the estimate.
The trade-off versus a parametric approach is that we do not need to assume a particular form of the value function, at the expense of storing the dataset $\gD$.
This can be a reasonable trade-off in practice when the dataset is not too large.
We show in the analysis section that this non-parametric approach can achieve tighter bounds than a parametric approach.

\section{Analysis}
\label{sec:analysis}
We now provide performance bounds for the algorithms DPRL-D and DPRL-C presented in previous section.
The following theorem proves that $\pidp$ is a safe policy improvement over the behavior $\pib$. 
\begin{restatable}[DPRL Discrete]{theorem}{thmdpdiscrete}
    \label{thm:dp_discrete}
    Let $\pidp$ be the policy obtained by the DP algorithm. Then $\pidp$ is a safe policy improvement over the behavior policy $\pib$, with probability at least $1-\delta$
    \begin{gather}
        \rho(\pidp) - \rho(\pib)  \geq - \frac{\Vmax}{1-\gamma} \sqrt{\frac{1}{ \Nmin } \log \frac{C(\Nmin)}{\delta}}
    \end{gather}
    where $C(\Nmin)$ is the count of the number of $(s,a)$ pairs that are observed at least $\Nmin$ times in the dataset:
    \begin{gather}
        C(\Nmin) = \sum_{s \in \gS} \sum_{a \in \gA} \indicator \brackets{n(s, a) \geq \Nmin}
    \end{gather}
\end{restatable}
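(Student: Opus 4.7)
The plan is to combine the performance difference lemma with a uniform concentration argument on the relatively small set of state-action pairs that $\pidp$ is allowed to deviate on. The key structural observation is that $\pidp$ coincides with $\pib$ outside the decision points $\gSdp$, and at every $s \in \gSdp$ the chosen action lies in $\gAdp_s$; by the defining inequality of $\gAdp_s$ the \emph{empirical} advantage $\hat{Q}^{\pib}(s,\pidp(s)) - \hat{V}^{\pib}(s)$ is non-negative. The whole proof then amounts to turning this empirical non-negativity into a true non-negativity (up to a small slack) on a high-probability event.

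First, I would apply the performance difference lemma, which for the deterministic $\pidp$ reads
\begin{gather*}
    \rho(\pidp) - \rho(\pib) = \frac{1}{1-\gamma}\sum_{s}\eta^{\pidp}(s)\,A^{\pib}\!\bigl(s,\pidp(s)\bigr),
\end{gather*}
and then observe that every $s \notin \gSdp$ contributes zero, since there $\pidp(\cdot|s) = \pib(\cdot|s)$ and $\E_{a \sim \pib(\cdot|s)} A^{\pib}(s,a) = 0$. Thus the only non-trivial terms come from decision points, and it remains to lower-bound $A^{\pib}(s,\pidp(s))$ for $s \in \gSdp$.

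Second, I would convert the empirical advantage guarantee into a bound on the true advantage via concentration. For any candidate pair $(s,a)$ with $n(s,a) \geq \Nmin$, the estimate $\hat{Q}^{\pib}(s,a)$ is an average of at least $\Nmin$ i.i.d.\ first-visit Monte Carlo returns of $\pib$, each supported in $[0,\Vmax]$; the same applies to $\hat{V}^{\pib}(s)$, since $s \in \gSdp$ forces $n(s) \geq \Nmin$. Hoeffding's inequality gives, for any $t > 0$,
\begin{gather*}
    \Pr\!\bigl[\,|\hat{Q}^{\pib}(s,a) - Q^{\pib}(s,a)| \geq t\,\bigr] \leq 2\exp\!\bigl(-2\Nmin t^2/\Vmax^2\bigr),
\end{gather*}
and analogously for $\hat{V}^{\pib}$. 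Union-bounding over the $C(\Nmin)$ candidate $(s,a)$ pairs (and their induced states), with $t = \Vmax\sqrt{\log(C(\Nmin)/\delta)/(2\Nmin)}$ after absorbing constants, gives a simultaneous bound $\epsilon = O\!\bigl(\Vmax\sqrt{\log(C(\Nmin)/\delta)/\Nmin}\bigr)$ on every relevant $\hat{Q}^{\pib}$ and $\hat{V}^{\pib}$ error with probability at least $1-\delta$.

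On this good event, $A^{\pib}(s,\pidp(s)) \ge \bigl[\hat{Q}^{\pib}(s,\pidp(s)) - \hat{V}^{\pib}(s)\bigr] - 2\epsilon \ge -2\epsilon$ at every decision point, and plugging back into the performance difference lemma yields $\rho(\pidp) - \rho(\pib) \ge -2\epsilon/(1-\gamma)$, which matches the stated bound up to absolute constants. The main obstacle, and the step that drives the improved rate, is the union bound: one must avoid the naive $|\gS||\gA|$ dependence and instead count only the $C(\Nmin)$ pairs that $\pidp$ could ever deviate on, which is exactly the effect of the $\Nmin$ threshold built into $\gAdp_s$. A secondary subtlety is ensuring i.i.d.\ returns for Hoeffding, which I would handle by restricting to first-visit samples within each trajectory (the convention already adopted in the continuous variant); and although the set of candidate pairs is data-dependent, union-bounding over the \emph{entire} universe $\{(s,a) : n(s,a) \ge \Nmin\}$ rather than over the realized choices makes the selection step benign.
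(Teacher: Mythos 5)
Your proposal is correct and follows the same skeleton as the paper's proof: the performance-difference lemma, zero contribution from deferred states, non-negativity of the empirical advantage at decision points by construction of $\gAdp_s$, a concentration bound, and a union bound over only the $C(\Nmin)$ sufficiently-visited pairs rather than all of $\gS\times\gA$. The one place you genuinely diverge is the concentration step. The paper explicitly confronts the fact that $\hat{Q}^{\pib}(s,a)$ and $\hat{V}^{\pib}(s)$ are built from \emph{overlapping} returns (the trajectories in $\Omega_{s,a}$ are a subset of those in $\Omega_s$, and $G^n_{t_n}=G^n_{t_n:t_n^a}+\gamma^{t_n^a-t_n}G^n_{t_n^a}$), so it rewrites the empirical advantage as a single function of independent bounded return segments and applies McDiarmid's inequality once to that function. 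You instead apply Hoeffding separately to $\hat{Q}^{\pib}(s,a)$ and to $\hat{V}^{\pib}(s)$ and combine via the triangle inequality; this is legitimate --- the union bound over the two deviation events needs no independence between the two estimators, so the overlap is harmless --- but it costs you a factor of $2\epsilon$ instead of $\epsilon$ and roughly doubles the union-bound count, so you recover the stated rate only up to an absolute constant (the paper is itself inconsistent here: its final display carries $1/(2\Nmin)$ while the theorem statement has $1/\Nmin$). What the paper's route buys is a cleaner constant by treating the correlated pair jointly; what yours buys is elementarity, at the price of not engaging with the dependence structure the authors evidently considered the main technical point. Your observations that first-visit returns are needed for independence across trajectories, and that union-bounding over the full set $\set{(s,a): n(s,a)\geq\Nmin}$ neutralizes the data-dependent selection of $\gAdp_s$, both match the paper's treatment.
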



\textbf{Proof sketch.}
The key fact we exploit is the property that $\pidp$ always takes an advantageous action with respect to $\hat{Q}^\pib - \hat{V}^\pib$.
We express the $\hat{Q}^\pib$ and $\hat{V}^\pib$ as the first-visit monte-carlo average of the observed returns, making sure to split the returns into independent random variables (since the returns used to estimate the $Q$-values and $V$-values may overlap).
We then bound the advantage by using the fact that the advantage is zero for $(s,a)$ pairs for the states we defer, and bounded for the $(s,a)$ pairs that are observed at least $\Nmin$ times.  (See Appendix~\ref{sec:proofs} for the full proof.)

\textbf{Discussion.}
Our bound is a function of hyperparameters $\Nmin$ and $\delta$, and the data-dependent term $C(\Nmin)$.
The $C(\Nmin)$ term in our bound is a count of the number of $(s,a)$ pairs that are observed at least $\Nmin$ times in the dataset.
This term is much smaller than $\abs{\gS} \abs{\gA}$ when the behavior policy visits only a small subset of the state-action space.
The hyperparameter $\Nmin$ allows us to directly control the trade-off between high-confidence policy improvement (high $\Nmin$) and higher performance improvement at the cost of safety (low $\Nmin$).

\textbf{Comparison to baselines.}
We summarize the main differences between our bound and prior work here.  The bounds in prior work are reproduced in Appendix~\ref{sec:prior_bounds} for reference.  Most importantly, our dependence on $\abs{\gS}$ and $\abs{\gA}$ comes indirectly through the $C(\Nmin)$ term, which differs from the direct dependence on $\abs{\gS}$ and $\abs{\gA}$ by SPI and pessimism-based methods \citep{liuProvablyGoodBatch2020c,kimModelbasedOfflineReinforcement2023b}.
Thus, our bound is much tighter when the behavior policy has only visited a small subset of the state-action space more than $\Nmin$ times: we scale in the number of visited parts of the state-action space $C(\Nmin)$ rather than $\abs{\gS} \abs{\gA}$.  This difference would be most present when the behavior policy and transition dynamics are close to deterministic, or when the size of dataset is small relative to the size of the state-action space.
On the other hand, when the size of the dataset is large and the behavior distribution is closer to uniform, all bounds will be tight.
Our dependence on the $\Nmin$ parameter and effective horizon $1/(1-\gamma)$ matches the SPI \citep{larocheSafePolicyImprovement2019,schollSafePolicyImprovement2022,wienhoftMoreLessSafe2023a} literature.  However, that we do not require access to $\pi_b$. Unlike Corollary 2 of \citep{liuProvablyGoodBatch2020c}, our bound does not have a dependence on the threshold $b$ of the state-action density $\hat{\mu}(s,a)$, which has a direct correspondence to our $\Nmin$ parameter as $b = \Nmin/|D|$.
As a result, their bound gets looser as $b$ gets smaller. This happens when the size of the dataset gets larger while keeping the set of $(s,a)$ pairs with $\Nmin$ observations unchanged (i.e., more trajectories were added in the low-density regions). Our DPRL bound is unaffected by this superfluous extra data.


\textbf{Continuous case: DPRL-C.}
The following theorem proves that $\pidp$ is a safe policy improvement over the behavior $\pib$ in the continuous case.
Define $M(r, \Nmin)$ as the number of balls of radius $r$ needed to cover the subset of $\gX \subset \gS \times \gA$ where each $(s,a) \in \gX$ has at least $\Nmin$ data points in the ball $B_r(s,a)$. Similarly, define $M(r)$ as the number of balls of radius $r$ needed to cover the entire state-action space.
Also define $\epsilon_r$ as the maximum error in the $Q$-values in the ball $B_r(s,a)$. Finally, we assume that the error in estimating the action-values is bounded for all points $(s',a') \in B_r(s,a)$: $\abs{Q(s,a) - Q(s',a')} \leq \epsilon_r$

\begin{restatable}[DPRL Continuous]{theorem}{thmdpcontinuous}
    \label{thm:dp_continuous}
    Let the constant $M(r, \Nmin)$ be a measure of the volume on $\gS \times \gA$ that the dataset $\gD$ covers, and let the error in estimating the $Q$-values using a neighbor is bounded by $\epsilon_r$, then $\pidp$ is a safe policy improvement over the behavior policy $\pib$. That is, with probability at least $1-\delta$:
    {\small
    \begin{gather*}
        \rho(\pidp) - \rho(\pib)  \geq -\frac{\Vmax}{1-\gamma} \sqrt{\frac{1}{ 2 \Nmin } \log \frac{M(r, \Nmin)}{\delta}} - 3 \epsilon_r
    \end{gather*}}
\end{restatable}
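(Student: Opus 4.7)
My plan is to follow the structure of the proof sketch for the discrete case (Theorem~\ref{thm:dp_discrete}) and handle the two additional complications introduced by continuous states: (i) the neighbor-based estimators $\hat{Q}^{\pib}$ and $\hat{V}^{\pib}$ are biased because they aggregate returns from nearby but not identical state-action pairs, and (ii) the union bound must run over a covering of the ``well-covered'' part of $\gS \times \gA$ rather than over a finite set of $(s,a)$ pairs counted by $C(\Nmin)$.

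First I would invoke the performance difference lemma to write $\rho(\pidp) - \rho(\pib) = \frac{1}{1-\gamma}\E_{s \sim \eta^{\pidp}}[Q^{\pib}(s,\pidp(s)) - V^{\pib}(s)]$. For every state with $\gAdp_s = \emptyset$ the algorithm defers, so $\pidp(s) = \pib(s)$ and the integrand vanishes. For every remaining state $\pidp(s) \in \gAdp_s$, and by the construction of $\gAdp_s$ we have $\hat{Q}^{\pib}(s,\pidp(s)) - \hat{V}^{\pib}(s) \geq 0$. It therefore suffices to lower bound the true advantage $Q^{\pib}(s,\pidp(s)) - V^{\pib}(s)$ by the (non-negative) empirical advantage minus an $s$-uniform estimation error.

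Next I would decompose this per-state error into a bias piece and a concentration piece. Under the Lipschitz-style assumption $|Q^{\pib}(s,a) - Q^{\pib}(s',a')| \leq \epsilon_r$ for $(s',a') \in B_r(s,a)$, each neighbor return entering $\hat{Q}^{\pib}(s,a)$ has conditional mean within $\epsilon_r$ of $Q^{\pib}(s,a)$; the same holds for $\hat{V}^{\pib}(s)$, and one further $\epsilon_r$ appears when transferring a bound obtained at a cover centre back to the query state $s$, which together yield the $3\epsilon_r$ term. For the concentration piece, each estimator averages at least $\Nmin$ first-visit returns bounded in $[0,\Vmax]$, so Hoeffding's inequality produces a deviation of order $\Vmax\sqrt{\log(1/\delta')/(2\Nmin)}$ around the conditional mean.

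Finally, I would turn this per-state bound into a bound uniform in $s$ via a covering argument. Because the empirical advantage varies by only $O(\epsilon_r)$ across any ball of radius $r$, it is enough to control the Hoeffding deviation on a minimal $r$-cover of the subset of $\gS \times \gA$ with at least $\Nmin$ neighbors; taking the union bound over this cover replaces the $\log C(\Nmin)$ of the discrete case by $\log M(r,\Nmin)$. Multiplying by the $1/(1-\gamma)$ factor from the performance difference lemma gives the stated bound. I expect the main obstacle to be the bias bookkeeping, that is, carefully identifying three genuinely distinct sources of $\epsilon_r$ slack (in $\hat{Q}^{\pib}$, in $\hat{V}^{\pib}$, and in the cover-to-query transfer), together with verifying that the first-visit returns used inside each neighborhood are independent enough for Hoeffding to apply separately inside every ball of the cover.
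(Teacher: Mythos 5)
Your proposal follows essentially the same route as the paper's (quite terse) proof: performance difference lemma, zero advantage on deferred states, non-negativity of the empirical advantage by construction, Hoeffding/McDiarmid concentration of the first-visit Monte-Carlo averages over at least $\Nmin$ neighbors, a union bound over an $r$-cover of the densely covered region of size $M(r,\Nmin)$, and three $\epsilon_r$ bias terms. The only cosmetic difference is how the three $\epsilon_r$'s are attributed (you charge them to the $\hat{Q}$ bias, the $\hat{V}$ bias, and the cover-to-query transfer, whereas the paper charges one to neighbor-return estimation and two to cover-transfer steps), and your note about the overlap between the returns entering $\hat{Q}$ and $\hat{V}$ correctly anticipates the McDiarmid-style decomposition the paper uses in the discrete case.
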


\textbf{Discussion.}
The $M(r, \Nmin)$ term is a measure of the size of the region in the state-action space for which the policy improvement is guaranteed.
For datasets with density only on a small subset of the state-action space, the $M(r, \Nmin)$ term will be much smaller than $M(r)$.
The second term in the bound quantifies the penalty paid for using neighborhood-based estimates of the $Q$-values.
The hyperparameter $\Nmin$ influences the bound directly and indirectly through the $M(r, \Nmin)$ term. Higher $\Nmin$ leads to a high-confidence estimation of the $Q$-values, and also leads to a smaller $M(r, \Nmin)$ term since fewer $(s,a)$ pairs can be improved upon.
The hyperparameter $r$ influences the bound through the $M(r, \Nmin)$ term and the $\epsilon_r$ term. Smaller $r$ leads to a low-bias estimate of the $Q$-values, but also leads to a larger $M(r, \Nmin)$ term.

\textbf{Comparison to baselines.}
DeepAveragers \citep{shresthaDeepAveragersOfflineReinforcement2020} (which uses kNN to identify neighbors) also have a dependence on a term similar to $M(r, \Nmin)$ in our bound. However, their term approaches $M(r)$ if the dataset is sparse in the sense that each $(s,a)$ pair has very few neighborhood points in the dataset (for DPRL, the $M(r, \Nmin)$ will be close $0$ in this case).
The $\log M(r, \Nmin)$ term in our bound is a non-parametric alternative to the $\log \abs{\gF}$ term in bounds for methods which use a parametric function class $\gF$ (e.g., \citep{liuProvablyGoodBatch2020c})
The inherent trade-off is that the parametric methods optimize for a \emph{global} estimation error $\epsilon_{\gF}$ over the dataset (and hence have a dependence on terms such as $\epsilon_{\gF}$, $\abs{\gF}$ and $\abs{\gD}$), while our non-parametric method optimizes for a \emph{local} estimation error over the neighborhood of each $(s,a)$ pair (and hence has a dependence on $\epsilon_r$, $M(r, \Nmin)$ and $\Nmin$).
For large datasets with uniform exploration, both parametric and non-parametric methods can have a low generalization error over the dataset, and thus the improvement is similar.
However, for small datasets or for datasets with non-uniform exploration, parametric methods can have a looser bound because $\epsilon_{\gF}$ can be large, while our non-parametric method can have a tighter bound because $M(r, \Nmin)$ can be much smaller because we only incur the errors in dense regions of the dataset.
Finally, we note that our bound is more actionable than the bounds in prior work because we can estimate $M(r, \Nmin)$ using DBSCAN \citep{ester1996density} (by computing the total volume occupied by the core points).

\begin{figure}
    \centering
    \includegraphics[width=.6\linewidth]{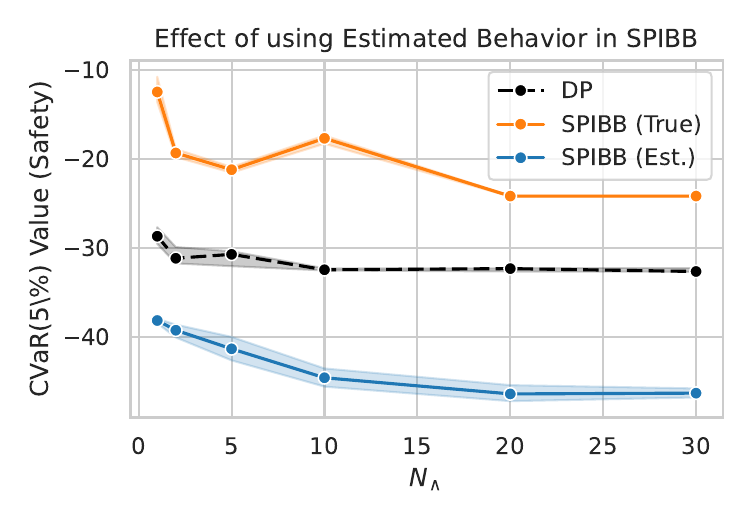}
    \caption{Gridworld: Replacing true behavior with estimated behavior in SPIBB leads to degraded CVaR. DP performs better than SPIBB without access to the true behavior.}
    \label{fig:spibb_with_truebehavior}
\end{figure}

\begin{figure}
  \centering
  \includegraphics[width=0.45\linewidth]{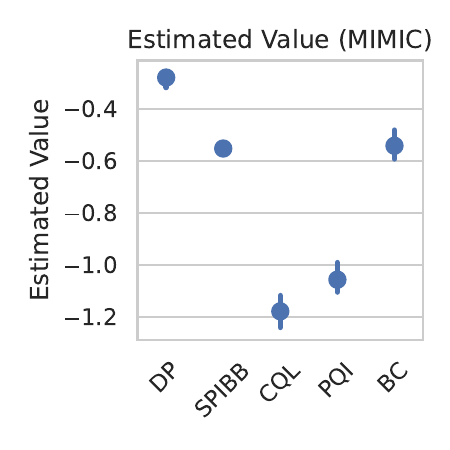}
  \includegraphics[width=0.45\linewidth]{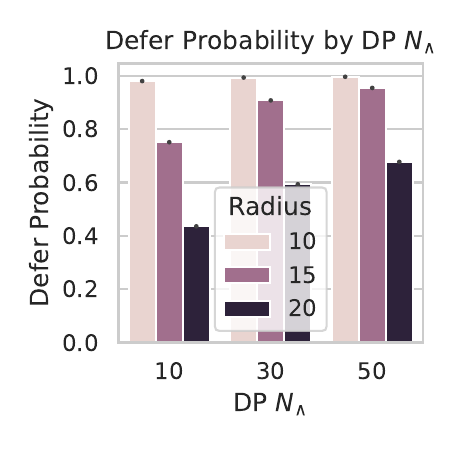}
  \caption{(Left) OPE value estimates of the learned policies on the MIMIC dataset: DPRL achieves the highest estimated value. (Right) The reason for DPRL's strong performance: for the chosen hyperparameters ($\Nmin = 50$, $r=10$), DPRL defers to the behavior in nearly all states except where it is confident it can achieve a better outcome.
  \label{fig:mimic_results}
  }
\end{figure}

\begin{figure}
    \centering
    \includegraphics[width=\linewidth]{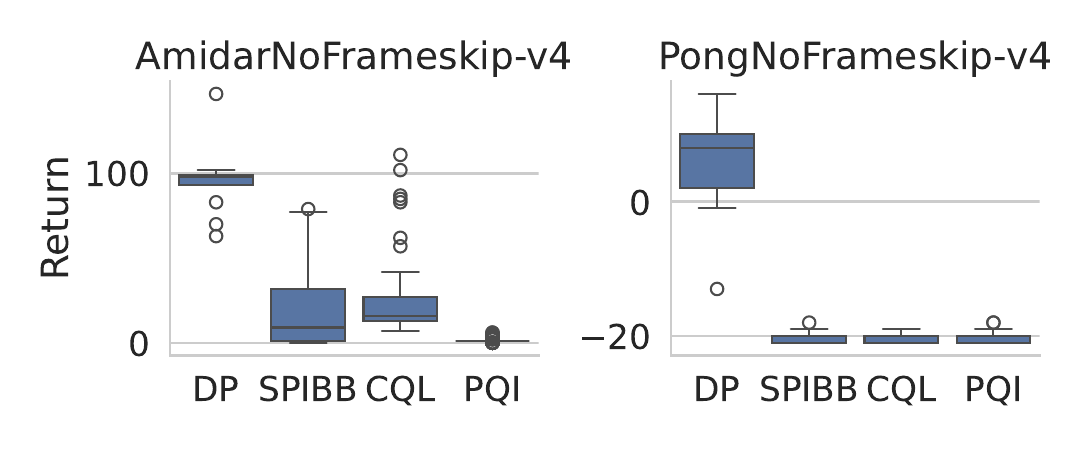}
    \caption{DP consistently learns good policies from suboptimal behavior data across Atari environments. Each algorithm is trained on 100,000 samples and evaluated on 20 episodes after training.}
    \label{fig:atari_results}
\end{figure}

\section{Experimental Evaluation}
\label{sec:experiments}

We first evaluate the DPRL on datasets from discrete-state-action MDPs and GridWorld and from continuous-state, discrete-action Atari environments, where we can use the simulators to accurately estimate performance. Finally, we use our algorithm on continuous-state, discrete-action-real-world dataset of hypotensive patients in the ICU.
We compare DPRL to SPIBB \citep{larocheSafePolicyImprovement2019}, PQI \citep{liuProvablyGoodBatch2020c}, and CQL \citep{kumarConservativeQLearningOffline2020}, which are typical batch RL methods used for safe policy improvement and span the density-based, pessimism-based, and support-based constraints.


In what follows, we briefly describe the domain and then discuss key results from our experiments. Specifically, we consider results over discrete state and action spaces using Toy MDP and Gridworld as domains; as well as continuous state and discrete action spaces based on Atari and MIMIC III.

\subsection*{Discrete State and Action Spaces: MDP and GridWorld}

We describe the MDPs in Section~\ref{sec:mdp_example}. For both MDPs, we vary the number of trajectories in the datasets. For our $10 \times 10$ GridWorld, we create datasets by simulating a `careless' expert which is optimal everywhere except in a few states, where it chooses the worst action with probability 0.9 (marked orange in Figure~\ref{fig:gw_results}). We sample datasets of $n$ trajectories, $n \in \set{10, 25, 50, 100, 500}$.

\textbf{DPRL achieves better theoretical bounds.}
On the MDP and GridWorld (Figures~\ref{fig:mdp_example} and\ref{fig:gw_results}, we see that the DPRL provides safe policy improvement (CVaR).
In GridWorld plots, we see that the CVaR without mean value being affected.
The $\Nmin$ parameter allows us to control the trade-off between performance and safety, as shown in Figure~\ref{fig:gw_results}(right): as $\Nmin$ increases, the mean value (performance) decreases, but the safety increases.
On both the MDP and GridWorld, we see that the DPRL provides tighter bounds compared to the baselines, as shown in Figure~\ref{fig:mdp_example}(center).
The bounds are tighter because the $C(\Nmin)$ term in our bound is much smaller than the $\abs{\gS} \abs{\gA}$ term in the SPIBB and PQI bounds. Also, since the bound is data-dependent, it does not degrade as quickly as SPIBB and PQI for increase in $\abs{\gS}$.

\textbf{DPRL better manages the bias/variance tradeoff than existing methods.}
Figure~\ref{fig:gw_results}(center) provides an insight into how  $\Nmin$ achieves the trade-off between safety and performance. As $\Nmin$ increases, the fraction of states where the optimal action is allowed decreases, leading to increase in bias. At the same time, the fraction of states where a better action is chosen increases (considering states where multiple actions are allowed). This suggests reduced variance in value estimation.
\subsection*{Atari and Hypotension Datasets}
\textbf{DPRL achieves good performance in high-dimensional settings.} 
On Atari domains, in Figure \ref{fig:atari_results}, we observe that DPRL achieves good performance even with 100K samples and a `careless' expert who takes the worst action $50\%$ of the time.
In the hypotension dataset with continuous states, DPRL is the only method that achieves higher OPE estimates than the behavior baseline (Figure~\ref{fig:mimic_results}). We also see how the ability to defer can be useful in such a complex task: for the chosen parameters $\Nmin=50, r=10$, DPRL defers more than $95\%$ of the time!
Such a policy is also actionable, since the physician can review the actions much faster than with the other methods.
\section{Limitations and Discussion}
The DPRL approach we present is non-parametric and requires storing the entire training data which can be costly. While we used BallTrees for efficient neighbor search, it would be interesting to explore how compression methods (e.g., coresets) can be used to store only a subset of the data. 
DPRL-C uses the euclidean distance with continuous states. Future work can look at more sophisticated metrics (e.g. bisimulation distance) and explore their influence on the safety bounds.
Finally, we developed the SMDP formulation for the DPRL-D algorithm which performed multi-step planning but the DPRL-C algorithm only performed 1-step planning over behavior. In the future, we plan to extend the DPRL-C algorithm to multi-step planning while still maintaining safety guarantees.

\section{Conclusion and Future Work}
We introduced a decision points-based RL algorithm for performing safe policy improvement with guarantees. Our approach explicitly constrains the set of state-action pairs or regions of states considered to those areas that are most densely visited, while leveraging data from sparsely visited states to determine the ways in which we may deviate from the behaviour policy. Our experiments demonstrated that our approach lead to safer policy improvement with more confident estimates and tighter guarantees on policy improvement. Future work could explore how to extend the DPRL-C algorithm to multi-step planning while still maintain safety guarantees and utilize embeddings which are more sophisticated than euclidean distance and to determine their influence on the safety bounds.

\bibliography{ref}

\onecolumn

\section{Proofs}
\label{sec:proofs}
\begin{lemma}[Performance Difference Lemma]
    \label{lem:perf_difference}
    Let $\pidp$ and $\pib$ be two policies. 
    Then, the difference in performance between the two policies is given by:
    \begin{gather}
        \rho(\pidp) - \rho(\pib)  = \frac{1}{1-\gamma} \E_{\eta^{\pidp}(s,a)} \brackets{ A^{\pib}(s, a) }
    \end{gather}
    where $A^{\pib}(s, a)$ is the advantage of taking action $a$ in state $s$ under policy $\pib$.
\end{lemma}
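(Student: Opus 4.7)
The plan is to prove the Performance Difference Lemma by the standard telescoping argument applied to the value function of the baseline policy $\pib$. Since the starting state $s_0$ is fixed, we have $\rho(\pidp) - \rho(\pib) = V^{\pidp}(s_0) - V^{\pib}(s_0)$. I would first rewrite $V^{\pib}(s_0)$ as a telescoping sum of discounted differences $V^{\pib}(S_t) - \gamma V^{\pib}(S_{t+1})$ under trajectories generated by $\pidp$; concretely, $V^{\pib}(s_0) = \mathbb{E}_{\pidp}\bigl[\sum_{t=0}^{\infty} \gamma^t\bigl(V^{\pib}(S_t) - \gamma V^{\pib}(S_{t+1})\bigr)\bigr]$, which is valid because the series is absolutely convergent (values are bounded by $\Vmax$ and $\gamma < 1$) and the initial state is deterministic.

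Next I would subtract this telescoped form from $V^{\pidp}(s_0) = \mathbb{E}_{\pidp}\bigl[\sum_{t=0}^{\infty} \gamma^t R_t\bigr]$. Combining the sums term-by-term yields
\begin{gather*}
\rho(\pidp) - \rho(\pib) = \mathbb{E}_{\pidp}\!\left[\sum_{t=0}^{\infty} \gamma^t \bigl(R_t + \gamma V^{\pib}(S_{t+1}) - V^{\pib}(S_t)\bigr)\right].
\end{gather*}
By the tower property, conditioning inside the expectation on $(S_t, A_t)$ and using the Bellman consistency equation $Q^{\pib}(s,a) = \mathbb{E}\bigl[R_t + \gamma V^{\pib}(S_{t+1}) \mid S_t = s, A_t = a\bigr]$ replaces the inner bracket with $Q^{\pib}(S_t, A_t) - V^{\pib}(S_t) = A^{\pib}(S_t, A_t)$.

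Finally, I would convert the expectation over trajectories into an expectation over the normalized discounted occupancy $\eta^{\pidp}(s,a)$ defined in Section~\ref{sec:background}. Since $\eta^{\pidp}(s,a) = (1-\gamma)\sum_{h=0}^{\infty} \gamma^h \eta^{\pidp}_h(s,a)$, the factor $1/(1-\gamma)$ emerges when we rewrite
\begin{gather*}
\sum_{s,a} \sum_{t=0}^{\infty} \gamma^t \eta^{\pidp}_t(s,a)\, A^{\pib}(s,a) = \frac{1}{1-\gamma}\, \mathbb{E}_{(s,a) \sim \eta^{\pidp}}\!\bigl[A^{\pib}(s,a)\bigr],
\end{gather*}
which yields the claimed identity. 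The only step requiring any care is the exchange of summation and expectation in the telescoping argument; this is a standard Fubini/dominated-convergence invocation justified by the uniform bound $|V^{\pib}| \leq \Vmax$ together with geometric decay, so I do not anticipate a substantive obstacle. The remainder is purely algebraic manipulation relying on the MDP definitions from Section~\ref{sec:background}.
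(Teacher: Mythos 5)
Your proof is correct and is essentially the standard telescoping argument (Kakade--Langford) that underlies Lemma 1.16 of the reference the paper cites in lieu of a proof, so the two approaches coincide. The convergence and measure-conversion steps you flag are handled exactly as you describe, with the $1/(1-\gamma)$ factor arising from the normalization $\eta^{\pidp}(s,a) = (1-\gamma)\sum_{h\ge 0}\gamma^h \eta^{\pidp}_h(s,a)$ given in Section~\ref{sec:background}.
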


\begin{proof}
    See Lemma 1.16 of \cite{agarwalReinforcementLearningTheorya}.
\end{proof}
\begin{lemma}[McDiarmid's Inequality]
    \label{lem:mcdiarmid}
        
    Suppose a function $f: \gX_1 \times \gX_2 \times \ldots \times \gX_n \to \R$ satisfies the bounded differences property if:
    \begin{gather}
        \sup_{x_i',x_i \in \gX_i} f(x_1, \ldots , x_{i-1}, x_i', x_{i+1}, \ldots , x_n) - f(x_1, \ldots , x_{i-1}, x_i, x_{i+1}, \ldots , x_n) \le c_i.
    \end{gather}
    for all $i \in [n]$ and constants $c_1, \ldots, c_n$. 
    Let $X = (X_1, X_2, \ldots, X_n)$ be an $n$-tuple of independent random variables, where $X_i \in \gX_i$ for all $i \in [n]$.
    Then, for any $\epsilon > 0$,
    \begin{gather}
        \Pr[\abs{ f(X) - \E[f(X)] } \ge \epsilon] \le \exp \parens{ \frac{-2\epsilon^2}{\sum_{i=1}^n c_i^2} }
    \end{gather}
\end{lemma}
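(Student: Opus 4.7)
The plan is to prove McDiarmid's inequality by the classical martingale method: build the Doob martingale for $f(X)$, show that each of its increments lies in a conditional range of length at most $c_i$, and then combine Hoeffding's lemma with a Chernoff bound.

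First I would set $Z_0 = \E[f(X)]$ and $Z_i = \E[f(X) \mid X_1, \ldots, X_i]$ for $i = 1, \ldots, n$, so that $Z_n = f(X)$. The quantity of interest telescopes as $f(X) - \E[f(X)] = \sum_{i=1}^n D_i$ with $D_i = Z_i - Z_{i-1}$, and $\{Z_i\}$ is a martingale with respect to the filtration generated by $X_1, \ldots, X_i$, so in particular $\E[D_i \mid X_1, \ldots, X_{i-1}] = 0$.

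The heart of the proof is showing that, conditional on $X_1, \ldots, X_{i-1}$, the increment $D_i$ lies almost surely in an interval of length at most $c_i$. Using independence of the coordinates $X_j$, I can write $Z_i = g_i(X_1, \ldots, X_i)$ where $g_i(x_1, \ldots, x_i) = \E[f(x_1, \ldots, x_i, X_{i+1}, \ldots, X_n)]$. Then, conditional on $X_1, \ldots, X_{i-1}$, the random variable $D_i$ is bounded above by $U_i := \sup_{x} g_i(X_1, \ldots, X_{i-1}, x) - Z_{i-1}$ and below by $L_i := \inf_{x} g_i(X_1, \ldots, X_{i-1}, x) - Z_{i-1}$. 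Applying the bounded-differences hypothesis pointwise to $f$ for any two values $x, x'$ of the $i$-th coordinate, and then integrating over $(X_{i+1}, \ldots, X_n)$ using independence, I obtain $\sup_{x} g_i(X_1, \ldots, X_{i-1}, x) - \inf_{x} g_i(X_1, \ldots, X_{i-1}, x) \le c_i$, so that $U_i - L_i \le c_i$. Hoeffding's lemma, applied to $D_i$ conditional on the past, then yields the conditional MGF bound $\E[e^{\lambda D_i} \mid X_1, \ldots, X_{i-1}] \le \exp(\lambda^2 c_i^2 / 8)$ for all $\lambda \in \R$.

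To finish, I would iterate via the tower property: $\E[e^{\lambda (f(X) - \E f(X))}] = \E[\prod_{i=1}^n e^{\lambda D_i}] \le \exp(\lambda^2 \sum_{i=1}^n c_i^2 / 8)$. A Chernoff step gives, for any $\lambda > 0$, $\Pr[f(X) - \E f(X) \ge \epsilon] \le \exp(\lambda^2 \sum_i c_i^2 / 8 - \lambda \epsilon)$, and optimizing at $\lambda^\star = 4\epsilon / \sum_i c_i^2$ produces $\exp(-2\epsilon^2 / \sum_i c_i^2)$. Repeating the argument with $-f$ in place of $f$ and taking a union bound yields the two-sided bound on $\abs{f(X) - \E f(X)}$ stated in the lemma (with a factor of $2$ absorbed into the statement, or stated as the one-sided version depending on convention).

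The main obstacle is the conditional-range step for the martingale differences: it is precisely where independence of the $X_j$ is genuinely used, and it requires care in swapping the supremum over $x_i$ with the expectation over the remaining coordinates before invoking the bounded-differences hypothesis. The remaining ingredients—Hoeffding's lemma on conditionally bounded random variables, the Chernoff/Markov inequality, and the scalar optimization in $\lambda$—are all mechanical.
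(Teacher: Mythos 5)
Your proposal is correct: it is the standard Doob-martingale proof of McDiarmid's inequality (construct $Z_i = \E[f(X)\mid X_1,\dots,X_i]$, bound each increment in a conditional interval of length $c_i$ using independence and the bounded-differences hypothesis, apply Hoeffding's lemma and a Chernoff bound, and optimize in $\lambda$). The paper itself does not prove this lemma at all --- it simply cites Theorem 2.3 of a set of lecture notes --- so your write-up supplies an argument where the paper offers only a pointer, and you correctly identify the one genuinely non-mechanical step (controlling the conditional range of the martingale differences by exchanging the supremum over the $i$-th coordinate with the expectation over the remaining ones). One point worth making explicit: the two-sided form as stated in the lemma should carry a prefactor of $2$, i.e.\ $\Pr[\abs{f(X)-\E f(X)}\ge\epsilon]\le 2\exp(-2\epsilon^2/\sum_i c_i^2)$, since it is obtained by a union bound over the upper- and lower-tail events; your parenthetical remark about the factor of $2$ is exactly right, and the statement as printed in the paper is the one that is slightly off (harmlessly so for the downstream use, where only the one-sided lower tail on the advantage is needed).
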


\begin{proof}
    See Theorem 2.3 of \citep{hajekECE543Statistical2019}
\end{proof}
\thmdpdiscrete*

\begin{proof}
    To prove the theorem, we first define a few useful quantities.
    Let $\Omega_s \subset \set{1,\cdots,N}$ be the set of trajectories that visit state $s$ and let $\Omega_{s,a} \subset \Omega_s$ be the set of trajectories that contain the $(s,a)$ pair.
    For $n \in \Omega_{s}$, let $t_n$ be the first time step that the state $s$ is visited in the trajectory $n$, i.e., $S^{n}_{t_n} = s$.
    For $n \in \Omega_{s,a}$, let $t_n^{a}$ be the first time step that the action $a$ is taken in the state $s$ in the trajectory $n$, i.e., $(S^{n}_{t_n^{a}}, A^{n}_{t_n^{a}}) = (s,a)$. Therefore, $t_n^{a} \geq t_n$.
    Let $G^n_{t}$ be the return of the trajectory $n$ starting from time step $t$, i.e., $G^n_{t} = \sum_{k=t}^{T_n} \gamma^{k-t} R^{n}_k$.
    We define the value function estimators $\hat{V}^{\pib}(s)$ and $\hat{Q}^{\pib}(s,a)$ as follows:
    \begin{gather}
        \hat{V}^{\pib}(s) = \frac{1}{\abs{\Omega_s}} \sum_{n \in \Omega_s} G^n_{t_n}, \quad
        \hat{Q}^{\pib}(s,a) = \frac{1}{\abs{\Omega_{s,a}}} \sum_{n \in \Omega_{s,a}} G^n_{t_n^{a}} \\
        \hat{A}^{\pib}(s,a) = \hat{Q}^{\pib}(s,a) - \hat{V}^{\pib}(s)
    \end{gather}
    Therefore, we have the following expectations:
    \begin{gather}
        \E[G^n_{t_n}] = V^{\pib}(s), \quad \text{and} \quad \E[G^n_{t_n^{a}}] = Q^{\pib}(s,a) \\
        \implies \E[\hat{V}^{\pib}(s)] = V^{\pib}(s), \quad \E[\hat{Q}^{\pib}(s,a)] = Q^{\pib}(s,a), \quad \E[\hat{A}^{\pib}(s,a)] = A^{\pib}(s,a)
    \end{gather}
    and for any $n \in \Omega_{s,a}$,
    \begin{align}
        G^n_{t_n} = G^n_{t_n:t_n^{a}} + \gamma^{t_n^{a}-t_n} G^n_{t_n^{a}}
    \end{align}
    where $G^n_{t_n:t_n^{a}}$ is the return of the trajectory $n$ from time step $t_n$ to $t_n^{a}$ (zero if $t_n = t_n^{a}$).

    Now, we can write the empirical advantage $\hat{A}^{\pib}(s,a)$ as:
    \begin{gather}
        \hat{A}^{\pib}(s,a) = \frac{1}{\abs{\Omega_{s,a}}} \sum_{n \in \Omega_{s,a}} G^n_{t_n^{a}} - \frac{1}{\abs{\Omega_s}} \sum_{n \in \Omega_s} G^n_{t_n} \\
        = \sum_{n \in \Omega_{s,a}} \frac{G^n_{t_n^{a}}}{\abs{\Omega_{s,a}}}  - \frac{\gamma^{t_n^{a}-t_n} G^n_{t_n^{a}}}{\abs{\Omega_s}}
        - \sum_{n \in \Omega_s} \frac{G^n_{t_n:t_n^{a}}}{\abs{\Omega_s}}
        - \sum_{n \in \Omega_s \setminus \Omega_{s,a}} \frac{G^n_{t_n}}{\abs{\Omega_s}} \\
        = S_1(\set{G^n_{t_n^{a}}}_{n \in \Omega_{s,a}}) + S_2(\set{G^n_{t_n:t_n^{a}}}_{n \in \Omega_s}) + S_3(\set{G^n_{t_n}}_{n \in \Omega_s})
    \end{gather}
    where all the random variables $\gG := \set{G^n_{t_n^{a}}}_{n \in \Omega_{s,a}} \cup \set{G^n_{t_n:t_n^{a}}}_{n \in \Omega_s} \cup \set{G^n_{t_n}}_{n \in \Omega_s}$ are independent and bounded in $[0, \Vmax]$.

    We can now apply McDiarmid's inequality to the function $f(\set{G^n_{t_n^{a}}}_{n \in \Omega_{s,a}}, \set{G^n_{t_n:t_n^{a}}}_{n \in \Omega_s}, \set{G^n_{t_n}}_{n \in \Omega_s}) = S_1 + S_2 + S_3$.
    Therefore, we have:
    \begin{gather}
        \Pr\brackets{f(\gG) - \E[f(\gG)] \geq \epsilon} \leq \exp\parens{-\frac{2 \epsilon^2 \Nmin}{2 \Vmax^2}} \\
        \implies \Pr\brackets{A^{\pib}(s,a) \geq -\epsilon - \hat{A}^{\pib}(s,a)} \leq \exp\parens{-\frac{2 \epsilon^2 \Nmin}{2 \Vmax^2}} \\
        \implies \Pr\brackets{A^{\pib}(s,a) \geq -\epsilon} \leq \exp\parens{-\frac{2 \epsilon^2 \Nmin}{2 \Vmax^2}}
    \end{gather}
    where the last inequality follows from the fact that $\hat{A}^{\pib}(s,a) \geq 0$ and $\Nmin$ is the minimum number of times the $(s,a)$ pair is observed in the dataset.
    
    Apply the union bound over all valid state-action pairs $C(\Nmin) = \sum_{s \in \gS} \sum_{a \in \gA} \indicator\brackets{n(s,a) \geq \Nmin}$ to get the desired result that with probability at least $1-\delta$:
    \begin{gather}
        A^{\pib}(s,a) \geq -\Vmax \sqrt{\frac{1}{\Nmin} \log \frac{C(\Nmin)}{\delta}}
    \end{gather}
    Note that we did not need to apply the union bound over the rest of the state-action pairs because the advantage is zero as $\pidp = \pib$ in those states.
     
    Finally, we can use the performance difference lemma to get the desired result:
    \begin{gather}
        \rho(\pidp) - \rho(\pib) = \frac{1}{1-\gamma} \E[A^{\pib}(s,a)] \geq - \frac{\Vmax}{1-\gamma} \sqrt{\frac{1}{2 \Nmin} \log \frac{C(\Nmin)}{\delta}}
    \end{gather}
    
\end{proof}
\thmdpcontinuous*

\begin{proof}
    The proof for the discrete case is identical except that we pay a penalty of $\epsilon_r$ everytime we use $(s,a)$'s neighbor's action-value ($Q(s',a')$ for $(s',a') \in B_r(s,a)$) to estimate $Q(s,a)$. We create a covering $\gC$ of the dense region of the dataset. We assume $M(r,\Nmin) \geq \abs{\gC}$ 
    \begin{enumerate}
        \item For any point $(s,a)$ with at least $\Nmin$ neighbors in $B_r(s,a)$, the advantage $A{\pib}(s,a)$ is bounded from below because it is a monte-carlo average (like in discrete case analysis). However, using the returns of the neighbors incurs an $\epsilon_r$ error.
        \item For any point $(s',a') \in \gC$, there is at least one neighbor with at least $\Nmin$ neighbors. We incur another $\epsilon_r$ error. Then we apply union bound over $M(r, \Nmin)$ points.
        \item Finally, we incur another $\epsilon_r$ error for going from points in $\gC$ to all the points where $A^{\pib}(s,a) \geq 0$ (i.e. we don't defer).
    \end{enumerate}
\end{proof}

\section{Bounds in Prior Work}
\label{sec:prior_bounds}

Here we reproduce the bounds in prior work for reference.
The bound by \citep{larocheSafePolicyImprovement2019} (Theorem 2) is, with probability $1-\delta$,
 \begin{gather}
     \rho(\hat{\pi}) - \rho(\pib) \geq - \frac{4 \Vmax}{1 - \gamma} \sqrt{ \frac{2}{\Nmin} \log \frac{2 \abs{\gS} \abs{\gA} 2^{\abs{\gS}}}{\delta}} - \rho(\pi, \hat{M}) + \rho(\pib, \hat{M}) 
 \end{gather}
 This bound also depends on the $\Nmin$ parameter and $(1-\gamma)$ in the same way as our bound. 
 However, the bound differs from ours because it has a dependence on the number of states and actions. This is much larger than the $C(\Nmin)$ term in our bound, which only scales with the number of $(s,a)$ pairs that are observed at least $\Nmin$ times.

 The bound by \citep{liuProvablyGoodBatch2020c} (Corollary 2) is, with probability $1-\delta$,
 \begin{gather}
     \rho(\hat{\pi}) - \rho(\pib) \geq - \tilde{\gO} \parens{ \frac{\Vmax}{b (1-\gamma)^3} \frac{\abs{\gS} \abs{\gA}}{n} + \frac{\Vmax}{b (1-\gamma)^3} \sqrt{\frac{\abs{\gS} \abs{\gA}}{n}} + \frac{\gamma^K \Vmax}{(1-\gamma)^2}}
 \end{gather}
 This bound differs from ours because it has a dependence on $b := \min_{s,a} \eta^{\pib}(s,a)$, and the bound can be even more loose when $\hat{\eta}^{\pib}$ is used to estimate $\eta^{\pib}$. Furthermore, the bound also has a dependence on the number of states and actions, which can be large in practice.
 In contrast, our bound is directly in terms of the $\Nmin$ parameter, and does not have to assume any bounds on the data distribution. This allows for a tighter bound in practice even when the excluded $(s,a)$ pairs are the same.

 The bound by \citep{kimModelbasedOfflineReinforcement2023b} (Theorem 2) is, with probability $1-\delta$,
 \begin{gather}
     \rho(\hat{\pi}) - \rho(\pib) \geq -\frac{\gamma \Vmax}{(1-\gamma)^2} \E_{(s,a) \sim \eta^{\pib}_{\hat{P}}} \brackets{ \min\parens{1, \sqrt{ \frac{2}{n(s,a)} \log \frac{\abs{\gS} \abs{\gA} }{\delta} }} }
 \end{gather}
 where $\eta^{\pib}_{\hat{P}}(s,a)$ is the state-action visitation distribution under the behavior policy and the MLE transition model.
 This bound is similar to ours in that it depends on the count $n(s,a)$, but it has a dependence on the number of states and actions. Furthermore, the dependence on $n(s,a)$ is only useful when the count is large, and the bound can be loose when there are many $(s,a)$ pairs with low counts. We avoid this by directly controlling the set of $(s,a)$ pairs that are included in the policy set.

\section{Algorithms}
\subsection{Algorithm: DPRL-Discrete}
\begin{algorithm}
\caption{DPRL-D for Discrete States}
\label{alg:dp_discrete}
\begin{algorithmic}[1]
\STATE \textbf{Input:} Dataset, $\gD = \set{ (S^n_t, A^n_t, R^n_t)}$
\STATE Compute $\hat{Q}^{\pi_b}, \hat{V}^{\pi_b}$ using the dataset $\gD$
\STATE Compute $\gSdp, \gAdp = \set{\gAdp_s: s \in \gSdp}$ using Eq~\ref{eq:advantageous_actions_discrete}
\STATE $\Pdp, \Rdp, \gammadp \leftarrow \text{Make SMDP Parameters}(\gD, \gSdp)$
\STATE \textbf{Policy Iteration:}
\STATE \textbf{Initialize:} $i \leftarrow 1$ and $V^{(1)} \leftarrow \hat{V}^{\pi_b}$
\STATE $\pi^{(1)}(s) \leftarrow \arg\max_{a \in \gAdp_s} \hat{Q}^{\pi_b}(s, a) \quad \forall s \in \gSdp$
\REPEAT
\STATE $i \leftarrow i + 1$
\STATE $V^{(i)} \leftarrow \text{PolicyEval} \left[ \Pdp, \Rdp, \pi^{(i)} \right]$
\STATE Update $\pi^{(i)}(s)$ using Eq~\ref{eq:policy_improvement}
\UNTIL{$\infnorm{V^{(i)} - V^{(i-1)}} \leq \varepsilon$}
\STATE \textbf{Output:} $\pi^{(i)}$
\end{algorithmic}
\end{algorithm}
\subsection{Algorithm to Construct SMDP Parameters for Discrete States}
\label{sec:make_smdp}
\begin{algorithm}[H]
    \caption{Make SMDP Parameters}
    \label{alg:make_smdp}
    \begin{algorithmic}[1]
    \STATE \textbf{Input:} Dataset, $\gD = \set{ (S^n_t, A^n_t, R^n_t) : n = 1, \ldots, N, t = 1, \ldots, T_n }$
    \STATE \textbf{Input:} Decision points, $\gSdp \subset \gS$
    
    \STATE $\gSdp \leftarrow \text{set of all possible states}$
    
    \FOR{$n$ in $[1, \cdots N]$}
    \STATE $\tau^n \leftarrow \text{dict()}$
    \FOR{$t$ in $[1, T_n]$}
    \IF{$S^n_t \notin \tau^n$ and $S^n_t \in \gSdp$}
    \STATE $\tau^n[S^n_t] \leftarrow t$
    \ENDIF
    
    \STATE $\tau^n_{\text{sorted}} \leftarrow \text{sort}(\tau^n)$
    
    \FOR{$t, t'$ in zip($\tau^n_{\text{sorted}}[:-1], \tau^n_{\text{sorted}}[1:]$)}
    \STATE $Y(S^n_t, A^n_t, S^n_{t'}) \leftarrow Y(S^n_t, A^n_t, S^n_{t'}) + \gamma^{t' - t}$
    \STATE $G(S^n_t, A^n_t, S^n_{t'}) \leftarrow G(S^n_t, A^n_t, S^n_{t'}) + \sum_{k=t}^{t'} \gamma^{k - t} R^n_k$
    \STATE $\ndp(S^n_t, A^n_t, S^n_{t'}) \leftarrow \ndp(S^n_t, A^n_t, S^n_{t'}) + 1$
    \ENDFOR
    \ENDFOR
    \ENDFOR
    
    \STATE $\gammadp(s, a, s') \leftarrow Y(s, a, s') / \ndp(s, a, s')$
    \STATE $\Pdp(s' | s, a) \leftarrow {\ndp(s, a, s')} / {\sum_{s' \in \gSdp} \ndp(s, a, s')}$
    \STATE $\rdp(s, a, s') \leftarrow G(s, a, s') / \ndp(s, a, s')$
    \STATE $\Rdp(s,a) \leftarrow \sum_{s' \in \gSdp} \rdp(s, a, s') \Pdp(s, a, s')$
    
    \STATE \textbf{Output:} $\Pdp, \Rdp, \gammadp$
    \end{algorithmic}
\end{algorithm}

\subsection{Algorithm: DPRL-Continuous}

\begin{algorithm}[H]
    \caption{DPRL-C for Continuous States}
    \label{alg:dp_continuous}
    \begin{algorithmic}[1]
    \STATE \textbf{Input:} State $s$
    \STATE \textbf{Input:} BallTree$_{sa}$($\cdot, \gD, r$), BallTree$_s$($\cdot, \gD, r$)
    \STATE $\gN(s) \leftarrow$ BallTree$_s$($s, \gD, r$)
    \STATE $\gN(s,a) \leftarrow$ BallTree$_{sa}$($(s,a), \gD, r$) for all $a \in \gA$
    \IF{$\abs{\gN(s)} \leq \Nmin$}
        \STATE \textbf{Output:} DEFER
    \ELSE
        \STATE Compute $\hat{V}^{\pi_b}(s)$ using $\gN(s)$
        \STATE Compute $\hat{Q}^{\pi_b}(s, a)$ using $\gN(s, a)$
        \STATE Compute $\gAdp_s$ using Eq~\ref{eq:advantageous_actions_continuous}
        \IF{$\gAdp_s = \emptyset$}
            \STATE \textbf{Output:} DEFER
        \ELSE
            \STATE \textbf{Output:} $\argmax_{a \in \gAdp_s} \hat{Q}^{\pi_b}(s, a)$
        \ENDIF
    \ENDIF
    \end{algorithmic}
\end{algorithm}

\section{Details about the Synthethic MDPs}
\label{sec:appdx_mdps}

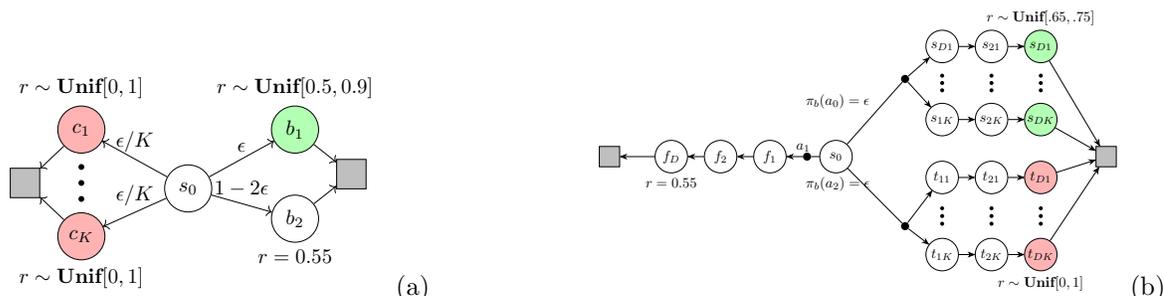
\begin{figure}[h]
    \centering
    \begin{minipage}[b]{0.45\linewidth}
        \centering
        \resizebox{.65\textwidth}{!}{\begin{tikzpicture}[
      node distance=0.7cm and 0.4cm,
      state/.style={draw, circle, minimum size=8mm, inner sep=0pt}, 
      action/.style={fill, circle, minimum size=2pt, inner sep=2pt}, 
      square/.style={draw, rectangle, minimum size=5mm, inner sep=0pt, fill=gray!50}, 
      dots/.style={fill, circle, minimum size=1pt, inner sep=1pt}, 
      every edge/.style={draw, -{Stealth[round]}, thick},
    ]

    \node[state] (s0) at (0, 0) {$s_0$};

    \node[state, fill=green!30] (b1) [right=1cm of s0, yshift=1cm] {$b_1$};
    \node[state] (b2) [below=of b1] {$b_2$};

    \node[above] at (b1.north) {$r \sim \textbf{Unif}[0.5, 0.9]$};
    \node[below] at (b2.south) {$r = 0.55$};

    \node[state, fill=red!30] (c1) [left=1cm of s0, yshift=1cm] {$c_1$};
    \node[state, fill=red!30] (cK) [below=1cm of c1] {$c_K$};

    \node[above] at (c1.north) {$r \sim \textbf{Unif}[0, 1]$};
    \node[below] at (cK.south) {$r \sim \textbf{Unif}[0, 1]$};

    \node[square] (sq_b) at ($(b1)!0.5!(b2)$) [right=0.7cm] {}; 
    \node[square] (sq_c) at ($(c1)!0.5!(cK)$) [left=0.7cm] {};  

    \draw[->] (s0) -- (b1) node[midway, above] {$\epsilon$};
    \draw[->] (s0) -- (b2) node[midway, above] {$1-2\epsilon$};

    \draw[->] (s0) -- (c1) node[midway, above] {$\epsilon/K$};
    \draw[->] (s0) -- (cK) node[midway, above] {$\epsilon/K$};

    \draw[->] (b1) -- (sq_b);
    \draw[->] (b2) -- (sq_b);

    \draw[->] (c1) -- (sq_c);
    \draw[->] (cK) -- (sq_c);

    \node[dots] at ($(c1)!0.35!(cK)$) {}; 
    \node[dots] at ($(c1)!0.50!(cK)$) {}; 
    \node[dots] at ($(c1)!0.65!(cK)$) {}; 

\end{tikzpicture}
        (a)
    \end{minipage}
    \hspace{0.05\linewidth} 
    \begin{minipage}[b]{0.45\linewidth}
        \centering
        \resizebox{.9\textwidth}{!}{\begin{tikzpicture}[
      node distance=0.7cm and 0.4cm, 
      state/.style={draw, circle, minimum size=8mm, inner sep=0pt}, 
      action/.style={fill, circle, minimum size=2pt, inner sep=2pt}, 
      square/.style={draw, rectangle, minimum size=5mm, inner sep=0pt, fill=gray!50}, 
      dots/.style={fill, circle, minimum size=1pt, inner sep=1pt}, 
      every edge/.style={draw, -{Stealth[round]}, thick}, 
      >=Stealth
    ]

    \node[state] (s0) {$s_0$}; 

    \node[action] (a0) [above=1cm of s0, xshift=1.7cm, yshift=.4cm] {}; 
    \node[action] (a1) [left=.2cm of s0] {}; 
    \node[action] (a2) [below=1cm of s0,  xshift=1.7cm, yshift=-.2cm] {}; 

    \draw[] (s0) -- (a0) node[midway, above left] {$\pi_b(a_0)=\epsilon$}; 
    \draw[] (s0) -- (a1) node[midway, above left] {$a_1$}; 
    \draw[] (s0) -- (a2) node[midway, above left] {$\pi_b(a_2)=\epsilon$}; 

    \node[square] (z) [right=7cm of a1] {}; 
    \node[square] (z1) [left=4.5cm of a1] {};

    \node[state] (s1D) [right=of a0, yshift=0.8cm] {$s_{D1}$}; 
    \draw[->] (a0) -- (s1D); 

    \node[state] (s2D) [right=of s1D] {$s_{21}$}; 
    \draw[->] (s1D) -- (s2D); 

    \node[state, fill=green!30] (sD1) [right=of s2D] {$s_{D1}$}; 
    \draw[->] (s2D) -- (sD1); 

    \draw[->] (sD1) -- (z); 

    \node[state] (s1K) [right=of a0, yshift=-1cm] {$s_{1K}$}; 
    \draw[->] (a0) -- (s1K); 

    \node[state] (s2K) [right=of s1K] {$s_{2K}$}; 
    \draw[->] (s1K) -- (s2K); 

    \node[state, fill=green!30] (sDK) [right=of s2K] {$s_{DK}$}; 
    \draw[->] (s2K) -- (sDK); 

    \draw[->] (sDK) -- (z); 
    \node[above] at (sD1.north) {$r \sim \textbf{Unif}[.65, .75]$};

    \node[dots] at ($(s1D)!0.4!(s1K)$) {}; 
    \node[dots] at ($(s1D)!0.5!(s1K)$) {}; 
    \node[dots] at ($(s1D)!0.6!(s1K)$) {}; 

    \node[dots] at ($(s2D)!0.4!(s2K)$) {}; 
    \node[dots] at ($(s2D)!0.5!(s2K)$) {}; 
    \node[dots] at ($(s2D)!0.6!(s2K)$) {}; 

    \node[dots] at ($(sD1)!0.4!(sDK)$) {}; 
    \node[dots] at ($(sD1)!0.5!(sDK)$) {}; 
    \node[dots] at ($(sD1)!0.6!(sDK)$) {}; 

    \node[state] (f1) [left=of a1] {$f_1$}; 
    \draw[->] (a1) -- (f1); 

    \node[state] (f2) [left=of f1] {$f_2$}; 
    \draw[->] (f1) -- (f2); 

    \node[state] (fD) [left=of f2] {$f_{D}$}; 
    \draw[->] (f2) -- (fD); 

    \draw[->] (fD) -- (z1); 
    \node[below] at (fD.south) {$r=0.55$};

    \node[state] (t1D) [right=of a2, yshift=1.2cm] 
    {$t_{11}$}; 

    \draw[->] (a2) -- (t1D); 

    \node[state] (t2D) [right=of t1D] {$t_{21}$}; 
    \draw[->] (t1D) -- (t2D); 

    \node[state, fill=red!30] (tD1) [right=of t2D] {$t_{D1}$}; 

    \draw[->] (t2D) -- (tD1); 

    \draw[->] (tD1) -- (z); 

    \node[state] (t1K) [right=of a2, yshift=-0.7cm] {$t_{1K}$}; 
    \draw[->] (a2) -- (t1K); 

    \node[state] (t2K) [right=of t1K] {$t_{2K}$}; 
    \draw[->] (t1K) -- (t2K); 

    \node[state, fill=red!30] (tDK) [right=of t2K] {$t_{DK}$}; 
    \draw[->] (t2K) -- (tDK); 
    \node[below] at (tDK.south) {$r \sim \textbf{Unif}[0, 1]$};
    \draw[->] (tDK) -- (z); 

    \node[dots] at ($(t1D)!0.4!(t1K)$) {}; 
    \node[dots] at ($(t1D)!0.5!(t1K)$) {}; 
    \node[dots] at ($(t1D)!0.6!(t1K)$) {}; 

    \node[dots] at ($(t2D)!0.4!(t2K)$) {}; 
    \node[dots] at ($(t2D)!0.5!(t2K)$) {}; 
    \node[dots] at ($(t2D)!0.6!(t2K)$) {}; 

    \node[dots] at ($(tD1)!0.4!(tDK)$) {}; 
    \node[dots] at ($(tD1)!0.5!(tDK)$) {}; 
    \node[dots] at ($(tD1)!0.6!(tDK)$) {}; 

    \end{tikzpicture}
        (b) 
    \end{minipage}
    
    \caption{(a) Toy MDP for CQL: The optimal action leads to state $b_1$, and the suboptimal action (frequent under behavior) leads to state $b_2$. There are also risky states $\{c_1, \dots, c_K\}$. (b) Toy MDP for PQI: The optimal action ($a_0$) leads to the upper branch with two parallel chains indexed by depth ($D$) and chain number ($K$). $a_1$ leads to the middle branch, and $a_2$ leads to the lower branch with two parallel chains indexed by depth ($D$) and chain number ($K$). All branches converge to the final state $z$. Intermediary dots indicate additional potential connections between parallel chains.}
    \label{fig:combined}
\end{figure}

\paragraph{Experimental details for the bounds plot.}
To create the bounds plot, we simulated the forests MDP for 50 trials. For each trial, we generated a new dataset, and for each dataset, we computed the bounds for three methods: DP, SPIBB, and PQI. We varied parameters $\Nmin$ in $\set{1, 10, 100}$ and the number of states to observe their effect on the bounds (we varied the states by varying the number of chains in the `forests" in $\set{10, 20, 30, 50}$. We set the density threshold for PQI to $0.02$ for all the simulations.

\section{Gridworld Experimental Details}

The GridWorld has 100 states and 4 actions. The agent must start at the bottom left cell and reach the top right cell. The dynamics are stochastic with going to the intended cell with a probability of $0.9$ and a simulating a random action otherwise. The rewards are stochastic, and described in Figure~\ref{fig:gw_results}(A).
We sampled 500 random datasets from the environment to evaluate the reliability of the algorithms.
For DPRL, SPIBB and PQI, we tested the $\Nmin$ parameter in $\set{1,2,5,10,20,30}$ and found $\Nmin=20$ to be good for all the algorithms. For CQL, we varied $\alpha$ in $\set{0.01,0.05,0.1}$.

\section{Atari Experimental Details}

We use Atari datasets and environments for our experiments, specifically focusing on five environments: \textbf{Qubert}, \textbf{Pong}, \textbf{Freeway}, \textbf{Booling}, and \textbf{Amidar}. These environments allow for a diverse range of behaviors and complexities to evaluate our methods effectively.

\paragraph{Policy Training}
\begin{itemize}
    \item \textbf{Optimal Policy}: We trained an optimal policy for each environment by running a \textbf{Deep Q-Network (DQN)} for \textbf{10 million steps}. The \textbf{Stable Baselines} implementation of DQN was used for this training.
    \item \textbf{Medium Policy}: In addition to the optimal policy, we defined a "medium" policy by using the state of the DQN after \textbf{1 million steps} of training, which represents a less performant but reasonable policy.
\end{itemize}

\paragraph{Trajectory Simulation}
We simulated trajectories for each environment with a structured deviation from the optimal policy:
\begin{itemize}
    \item \textbf{Action Selection Probability}: For each action in the trajectory, the probability of taking the optimal action was set to \textbf{0.5}, while the probability of taking an action based on the medium policy was also set to \textbf{0.5}.
    \item \textbf{Data Generation}: We simulated \textbf{100,000 samples} for each environment, which correspond to a different number of episodes per environment depending on their characteristics.
\end{itemize}

This setup creates a structured deviation from the optimal policy, where the action taken can either be optimal or highly suboptimal, making the dataset suitable for evaluating off-policy methods.

\paragraph{Representation Learning}
To ensure consistency across all methods, we used a \textbf{16-dimensional representation} for state-action pairs, which were learned using the collected datasets. This step standardizes the inputs for the comparison of various algorithms.

\begin{itemize}
    \item \textbf{State Representation}: We used the mean of all action representations corresponding to each state to derive the state representations.
    \item \textbf{Data Storage}: We utilized a \textbf{BallTree} data structure to store the data, enabling efficient nearest neighbor searches during evaluation.
\end{itemize}

\paragraph{Implementation Details}
We compared several methods using the learned state-action representations, with the following specific implementations:

\begin{itemize}
    \item \textbf{PQI and SPIBB}: We used \textbf{ExtraTrees} (with \textbf{100 trees}) to estimate Q-values. ExtraTrees offers a highly flexible function approximation for this purpose.
    
    \item \textbf{CQL}: Since ExtraTrees is unsuitable for CQL (due to the nature of its loss function), we employed a \textbf{DQN architecture} with a single hidden layer of \textbf{16 nodes} for Q-value estimation.
    
    \item \textbf{PQI Density Estimation}: For PQI, we estimated the joint density of state-action pairs by counting the number of neighbors in the vicinity of each state-action pair using the \textbf{BallTree} data structure. This count was then divided by the dataset size to obtain the density estimate. Although this is a heuristic, it provides a feasible approach for density estimation in continuous state-action spaces.
    \begin{itemize}
        \item \textbf{Note}: The original PQI paper employed a \textbf{variational auto-encoder} for density estimation, which was applied only to states, not state-action pairs.
    \end{itemize}
    
    \item \textbf{SPIBB Behavior Policy}: We estimated the behavior policy using \textbf{random forests}, and the density of state-action pairs was computed in the same way as for PQI using the \textbf{BallTree} data structure.
\end{itemize}

\paragraph{Hyperparameters}
We conducted extensive experiments with the following hyperparameter settings (see Table \ref{tab:best_hyperparameters_atari} for the best hyperparameters):

\begin{itemize}
    \item \textbf{DPRL}: \( \Nmin \in \{5, 10, 30\} \)
    \item \textbf{SPIBB}: \( \Nmin \in \{5, 10, 30\} \)
    \item \textbf{PQI}: Density threshold (\( b \)) \( \in \{1e-4, 1e-3, 1e-2\} \)
    \item \textbf{Radius for neighbor search}: \( 1e-3 \) (for all methods)
    \item \textbf{CQL}: Alpha parameter \( \alpha \in \{1e-4, 1e-2, 1e-1\} \)
\end{itemize}

\begin{table}
\centering
\caption{Best hyperparameter combinations for each environment and algorithm (Atari).}
\label{tab:best_hyperparameters_atari}
\begin{tabular}{lll}
\toprule
 &  & hyperparameters \\
 \\
\midrule
\multirow[t]{4}{*}{AmidarNoFrameskip-v4} & CQL & $\alpha=0.0001$ \\
 & DP & $N_{\wedge}=5.0$, $r=0.0001$ \\
 & PQI & $b=0.001$, $r=0.001$ \\
 & SPIBB & $N_{\wedge}=5.0$, $r=0.001$ \\
\cline{1-3}
\multirow[t]{4}{*}{BowlingNoFrameskip-v4} & CQL & $\alpha=0.01$ \\
 & DP & $N_{\wedge}=5.0$, $r=0.0001$ \\
 & PQI & $b=0.01$, $r=0.001$ \\
 & SPIBB & $N_{\wedge}=10.0$, $r=0.001$ \\
\cline{1-3}
\multirow[t]{4}{*}{FreewayNoFrameskip-v4} & CQL & $\alpha=0.01$ \\
 & DP & $N_{\wedge}=5.0$, $r=0.0001$ \\
 & PQI & $b=0.0001$, $r=0.001$ \\
 & SPIBB & $N_{\wedge}=30.0$, $r=0.001$ \\
\cline{1-3}
\multirow[t]{4}{*}{PongNoFrameskip-v4} & CQL & $\alpha=0.1$ \\
 & DP & $N_{\wedge}=5.0$, $r=0.0001$ \\
 & PQI & $b=0.001$, $r=0.001$ \\
 & SPIBB & $N_{\wedge}=10.0$, $r=0.001$ \\
\cline{1-3}
\multirow[t]{4}{*}{QbertNoFrameskip-v4} & CQL & $\alpha=0.1$ \\
 & DP & $N_{\wedge}=5.0$, $r=0.0001$ \\
 & PQI & $b=0.0001$, $r=0.001$ \\
 & SPIBB & $N_{\wedge}=30.0$, $r=0.001$ \\
\cline{1-3}
\bottomrule
\end{tabular}
\end{table}

\begin{table}[h]
\centering
\caption{Architecture used by each DQN and Representation Learning Network (Atari)}
\label{tab:network_architecture}
\begin{tabular}{p{4cm}c p{5cm}}
\toprule
\textbf{Layer}                                 & \textbf{Number of outputs} & \textbf{Other details}                                      \\ \midrule
Input frame size                               & (4x84x84)                  & –                                                           \\ 
Downscale convolution 1                        & 12800                      & kernel 8x8, depth 32, stride 4x4                             \\ 
Downscale convolution 2                        & 5184                       & kernel 4x4, depth 32, stride 2x2                             \\ 
Downscale convolution 3                        & 3136                       & kernel 3x3, depth 32, stride 1x1                             \\ 
Layers in feedforward network of DQN           & [512, 64, 16, \(|A|\)]      & –                                                           \\ 
Layers in feedforward network of representation learning model & [512, 16 * \(|A|\)]      & layer norm, \texttt{tanh} activation after the last layer    \\ \bottomrule
\end{tabular}
\end{table}

\begin{table}[h]
\centering
\caption{All Hyperparameters for DQN used for training the expert policy}
\label{tab:dqn_hyperparameters}
\begin{tabular}{lp{6cm}} 
\toprule
\textbf{Hyperparameter}               & \textbf{Value}                         \\ \midrule
Network optimizer                      & Adam               \\ 
Learning rate                          & 0.0001                                \\ 
Adam $\epsilon$                        & 0.000015                              \\ 
Discount $\gamma$                      & 0.99                                  \\ 
Mini-batch size                        & 128                                   \\ 
Target network update frequency        & 10k training iterations               \\ 
Evaluation $\epsilon$                  & 0.001                                 \\ \bottomrule
\end{tabular}
\end{table}

\subsection{Additional Plots - Atari}

\begin{figure}[H]
    \centering
    \includegraphics[width=\textwidth]{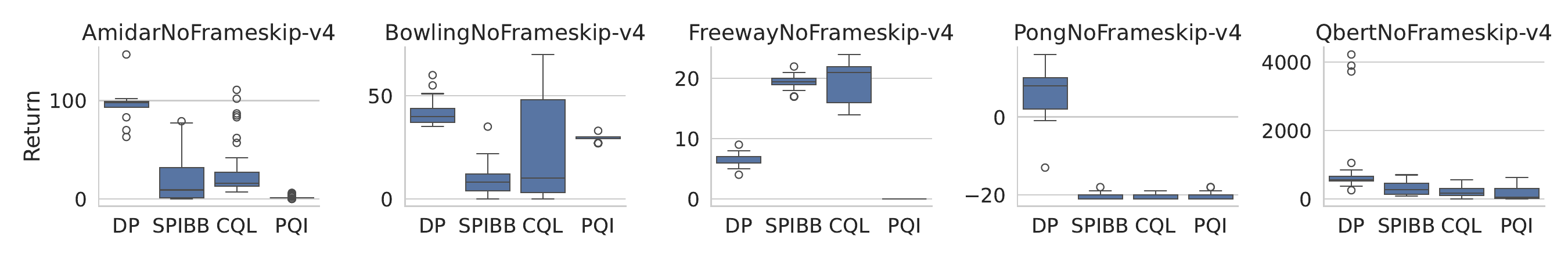}
    \caption{Performance comparison on the Atari domains.}
    \label{fig:atari_results_appendix}
\end{figure}

\section{MIMIC Dataset Experimental Details}

\begin{table}[h]
\centering
\caption{Original and Handcrafted Features with Weights and Data Types (MIMIC Dataset)}
\label{tab:mimic_features}
\begin{tabular}{l c c c}
\toprule
\textbf{Feature}                                & \textbf{Weight} & \textbf{Data Type} & \textbf{Present in the Raw Dataset} \\ \midrule
creatinine                                      & 3               & float              & Yes                                 \\
fraction inspired oxygen                        & 15              & float              & Yes                                 \\
lactate                                         & 15              & float              & Yes                                 \\
urine output                                    & 15              & float              & Yes                                 \\
urine output since last action                  & 5               & bool               & No                                  \\
alanine aminotransferase                        & 5               & float              & Yes                                 \\
aspartate aminotransferase                      & 5               & float              & Yes                                 \\
diastolic blood pressure                        & 5               & float              & Yes                                 \\
mean blood pressure                             & 15              & float              & Yes                                 \\
partial pressure of oxygen                      & 3               & float              & Yes                                 \\
systolic blood pressure                         & 5               & float              & Yes                                 \\
gcs                                             & 15              & float              & Yes                                 \\
gcs since last action                           & 5               & bool               & No                                  \\
creatinine ever recorded                        & 3               & bool               & No                                  \\
fraction inspired oxygen ever recorded          & 15              & bool               & No                                  \\
lactate ever recorded                           & 10              & bool               & No                                  \\
alanine aminotransferase ever recorded          & 5               & bool               & No                                  \\
aspartate aminotransferase ever recorded        & 5               & bool               & No                                  \\
partial pressure of oxygen ever recorded        & 3               & bool               & No                                  \\
flag vaso 1 was last action                     & 15              & bool               & No                                  \\
flag vaso 2 was last action                     & 15              & bool               & No                                  \\
flag vaso 3 was last action                     & 15              & bool               & No                                  \\
flag bolus 1 was last action                    & 15              & bool               & No                                  \\
flag bolus 2 was last action                    & 15              & bool               & No                                  \\
flag bolus 3 was last action                    & 15              & bool               & No                                  \\
total vasopressor dose                          & 15              & float              & No                                  \\
total bolus dose                                & 15              & float              & No                                  \\
total vasopressor dose last 8 hours             & 15              & float              & No                                  \\
total bolus dose last 8 hours                   & 15              & float              & No                                  \\ \bottomrule
\end{tabular}
\end{table}

\paragraph{Cohort and Environment Definition}
For the continuous states dataset, the EHR data contains deidentified clinical data of patients admitted to the Beth Israel Deaconess Medical Center ICU unit \citep{johnson2023mimic}. We select a cohort of patients with at least 7 mean arterial pressure (MAP) measurements of less than 65 mmHg (i.e. hypotension) within the first 72 hours of their ICU stay. The states have 11 features, and we add another 18 features as per prior work in the literature \citep{gottesman_interpretable_2020}. The actions correspond to 4 levels intravenous (IV) fluid bolus therapy and 4 levels of vasopressor therapy (total 16 discrete actions). Fluid bolus and vasopressors are the first-line treatments for patients with hypotension in the ICU.
We used 500 trajectories for training (28944 transitions), and 500 trajectories for validation (28710 transitions), and 1000 trajectories for final evaluation the performance of the algorithms (58033 transitions).

\paragraph{Preprocessing}
For our cohort, we selected the following features from the raw data for the analysis: \textit{creatinine}, \textit{fraction inspired oxygen}, \textit{lactate}, \textit{urine output}, \textit{alanine aminotransferase}, \textit{aspartate aminotransferase}, \textit{diastolic blood pressure}, \textit{mean blood pressure}, \textit{partial pressure of oxygen}, \textit{systolic blood pressure}, and \textit{gcs}. These features have been selected for the hypotension management task in prior work \cite{gottesman_interpretable_2020} 

To create the distance function, we used a weighted Euclidean metric with weights derived from \citet{gottesman_interpretable_2020} on this dataset and cohort. The final set of features includes the original features as well as a collection of handcrafted features, which were determined by domain experts to be relevant for the hypotension task. We provide the original set of features and the final set of features, along with their corresponding weights, in Table \ref{tab:mimic_features}. 

To ensure the state-action pairs are appropriately matched during the computation of distances, we assigned a weight of 10,000 to the action dimension when computing the distance between state-action pairs.

\paragraph{Training details}
For DPRL-C, we need to specify hyperparameters $r$ and $\Nmin$. We found $r=10$ to give the best results. We varied the $\Nmin$ in $\set{10, 30, 50}$.
For SPIBB and PQI, we need to provide similar sets of state-actions with sufficient count or density, and we use the same sets as the ones obtained using $r$ and $\Nmin$.

\paragraph{Evaluation}
To evaluate the MIMIC policies, we estimated the behavior policy $\pi_b$ (using ExtraTrees) and the $Q^\pi$ values (using Fitted Q-Evaluation with ExtraTrees). We then use Doubly Robust Off-policy evaluation (DR-OPE) \citep{jiangDoublyRobustPolicy} using these estimates. Note that we do expect $\pi_b$ and $Q^\pi$ estimates to be imperfect in parts of the state space due to the complexity and coverage challenges in this real-world dataset. Nevertheless, we used DR-OPE to mitigate some of the bias of the estimates.
\end{document}